\newtheorem{postulado}{Postulado }
\newtheorem{theorem}{Teorema}
\newtheorem{lemma}{Lema}
\newtheorem{corolario}{Corolario}
\newtheorem* {defNoNum}{Definición}
\newtheorem{definicion} {Definición}
\newtheorem{ejemplo}{Ejemplo}
\newcommand{\funi}{\footnotesize\textbf{i\hspace{-0.26
cm}\normalsize$\bigcirc$
}}
\begin{document}

\newpage
\setcounter{page}{1}

\begin{frontmatter}

\title{Hacia una teoría de unificación para los comportamientos cognitivos}

\author{Sergio Miguel Tomé \fnref{fn1}  }

\fntext[fn1]{ Cualquier comentario sobre este trabajo puede ser enviado a smt@smt.name }

\address{}

\begin{abstract}
Cada ciencia cognitiva intenta comprender un conjunto de comportamientos cognitivos concreto. La estructuración del conocimiento sobre este aspecto de la naturaleza, dista mucho de lo que se puede esperar de una ciencia. No se ha hallado una manera global de explicar consistentemente el  conjunto de todos los comportamientos cognitivos y sobre muchas cuestiones simplemente hay opiniones de miembros de la comunidad científica. Este artículo presenta tres propuestas. La primera es la de proponer a la comunidad científica la necesidad de plantearse seriamente el problema de la unificación de los comportamientos cognitivos. La segunda propuesta es reivindicar en el estudio científico de los comportamientos cognitivos la aplicación de las reglas para razonar sobre la naturaleza que aportó Newton en su libro Philosophiæ Naturalis Principia Mathematica. La tercera propuesta del artículo es una teoría científica en desarrollo que sigue las reglas planteadas por Newton para razonar sobre la naturaleza y que podría llegar a explicar de una manera global todos los comportamientos cognitivos.

\end{abstract}

\begin{keyword}
Comportamientos cognitivos\sep teoría de unificación.
\end{keyword}

\end{frontmatter}
\journal{ArXiv}
\newpage
\tableofcontents

\section{Introducción}
Cada ciencia intenta explicar un conjunto de fenómenos de la naturaleza mediante el método científico. El método científico asume que todo fenómeno de la naturaleza tiene una causa, a la que se denomina propiedad. Por ejemplo, la física propone que los fenómenos eléctricos son la manifestación de una propiedad denominada carga eléctrica. Así, la ciencia busca siempre hallar una teoría que explique el conjunto de fenómenos a partir de una relación entre los fenómenos y la propiedad que se asocia a ellos. Para que una teoría alcance el estatus de teoría científica, debe de ser posible realizar experimentos en los que se pueda comprobar la validez de sus postulados.

Uno de los objetivos que se plantea la ciencia para avanzar  en la explicación de los fenómenos de la naturaleza es el desarrollo de teorías de unificación. Una teoría de la unificación consiste en una teoría capaz de explicar mediante una sola propiedad y un conjunto de principios universales conjuntos de fenómenos que anteriormente eran considerados diferentes y sin relación. La primera unificación la llevó a cabo Newton mediante su obra Philosophiae Naturalis Principia Mathematica \cite{Newton}, donde unificó los fenómenos de los movimientos de los cuerpos celestes con los fenómenos de los movimientos de los cuerpos terrestres.

\subsection{ Las Ciencias Cognitivas}

Entre los conjuntos de fenómenos que hay en la naturaleza se encuentra el conjunto de los comportamientos cognitivos. Las ciencias cognitivas son un grupo de disciplinas que se dedican a estudiar los comportamientos cognitivos. Cada una de las disciplinas de las ciencias cognitivas intenta comprender y explicar un conjunto de comportamientos cognitivos concreto. Una de esas disciplinas es la Inteligencia Artificial (I.A.) que se ocupa del comportamiento cognitivo generado por máquinas. Otros ejemplos, son la etología y la neuroetología que tratan los comportamientos cognitivos generados por los animales. También está la psicología, neurofisiología y psiquiatría que se centra en los comportamientos cognitivos de los seres humanos. Para explicar un comportamiento cognitivo generado por un sistema (biológico o no biológico) cada ciencia cognitiva recurre a sus propias definiciones de propiedades o capacidades. En el caso de la psicología se habla de atención, memoria, percepción, etcétera. En la I.A. se habla de entropía en la organización de la información, el algoritmo de emparejamiento, los métodos con los que está construido el motor de inferencia, incertidumbre, etcétera. El abanico de disciplinas que en algunas de sus áreas estudia algún comportamiento cognitivo es enorme desde la lógica, pasando por la filosofía, hasta la lingüística. Además, hay que tener en cuenta que la interacción entre todas estas disciplinas es muy alta, por ejemplo entre inteligencia artificial y psicología, o entre inteligencia artificial y neurología, e incluso el surgimiento de nuevas áreas producto de estas interacciones como el área de psicología animal.

La I.A. nació en 1956 en la ``Dartmouth Summer Research Conference on Artificial Intelligence'', mayormente conocida como la conferencia de Dartmouth, con el objetivo de dotar de una completa inteligencia humana a los computadores. A la corriente científica que considera que es posible lograr el objetivo que se proponía en Dartmouth se la denomina I.A. fuerte. Durante todo este tiempo los investigadores no han cesado de trabajar para conseguir el objetivo de Dartmouth.  La idea más usada en la I.A. como guía para lograr el objetivo de Dartmouth ha sido desarrollar una teoría que explique los comportamientos humanos y que sea programable en un computador. A día de hoy, aunque el objetivo de Dartmouth no se ha conseguido, el trabajo de investigación que se ha realizado es enorme. Dentro de la I.A. se ha desa\-rrollado todo un campo denominado ingeniaría del conocimiento, en el que mediante metodologías instruccionales, las ciencias de la computación y las tecnologías de la información se ha logrado muy buenos resultados pese a des\-conocer los principios fundamentales que gobiernan los comportamientos en la naturaleza. Parece que en la I.A. se da un suceso, que ya ha ocurrido otras veces en la humanidad, y que consiste en que el descubrimiento de ciertos fenómenos es utilizado para desa\-rrollar una determinada tecnología; aunque las causas fundamentales que rijan el fenómeno se desconozcan. Un ejem\-plo de ello ocurrió con el telégrafo y el electromagnetismo, donde la técnica precedió a la ciencia.

Pero todos los trabajos desarrollados bajo el objetivo de Dartmouth y el propio objetivo pueden contemplarse con otro enfoque, el científico. Desde el enfoque científico, el trabajo que se ha realizado en la I.A. puede verse como el intento de encontrar teorías que unifiquen los comportamientos cognitivos de los seres humanos con los de los computadores. Los computadores no rea\-lizan innatamente comportamientos cognitivos, ya que son construidos por los seres humanos; pero eso no implica que no tengan un conjunto de comportamientos cognitivos asociado. El conjunto de comportamientos cognitivos de los computadores está compuesto por aquellos comportamientos para los que existe un programa que lo puede generar. Así, la primera propuesta de este artículo es reivindicar la aplicación del método que creó Newton para razonar sobre la naturaleza, que se halla en su obra Philosophiae Naturalis Principia Mathematica \cite{Newton}, a los comportamientos cognitivos.

Como se ha mencionado antes, el interés por desarrollar teorías que expliquen el comportamiento humano ha estado presente desde los inicios de la I.A..  En 1961,  Allan Newell  y H. A. Simon exploraron en su artículo\cite{NewellSimon1961} GPS, A  Program that Simulates Human Thought  en que grado se puede considerar a los programas informáticos como teorías capaces de explicar los comportamientos humanos. En el anterior artículo se puede leer:
 \newline
 \newline
``\emph {We will consider only how GPS behaves on the first part of the problem, and we will compare it in detail with the subject's behaviour as revealed in the protocol. This will shell considerable light on how far we can consider programs as theories of human problem solving}''.
 \newline

Los autores del citado artículo examinaron un programa desarrollado por ellos, el GPS\cite{GPS1959} (General Problem Solver). Los resultados que presentaron los autores en el artículo para considerar el programa GPS como  una teoría del comportamiento humano  fueron positivos, haciéndolo notar con una rotunda frase final:
 \newline
 \newline
``\emph{Although we know only for small fragments of behaviour, the deep of explanation is striking.}"
 \newline

 La cuestión de la unificación, aunque de una manera diferente a la pro\-puesta en este artículo, ha estado especialmente presente en la I.A. desde que Allan Newell presentó el problema de las microteorías \cite{Newell1981}. Newell mostró que las teorías que se estaban presentando, que eran correctas para los fragmentos de comportamientos escogidos, eran contradictorias entre sí. Así, Newell propuso a la psicología la necesidad de unificar las distintas teorías de los comportamientos humanos que se habían desarrollado. Esta idea la plasmó especialmente en su libro\cite{Newell1990} Unified Theories of Cognition. Newell, a diferencia de lo que se propone en este artículo, no consideraba que los computadores tuvieran un conjunto de comportamientos asociados, para él,  simplemente los computadores eran un mecanismo para comprobar si la unificación había tenido éxito. Así, Newell propuso, en su programa de unificación de teorías de la psicología, que las diversas teorías de unificación que se desarrollaran debían ser instanciadas en arquitecturas cognitivas, las cuales demostrarían mediante su funcionamiento en computadoras cuales eran correctas y cuáles no. Newell esperaba que se pudiera repetir continuadamente el proceso de unificación hasta llegar a una teoría que explicara todo el comportamiento humano.
 \newline
 \newline
``\textit{Thus, there will be multiple unified theories  for quite a while- if we can just get a few started, to show the way. My concern is that each one, itself, should be unified. We should be comparing one unified theory with another}''
 \newline

Hay que darse cuenta  que si la unificación de Newell tuviera éxito como consecuencia se lograría el objetivo de Dartmouth. Por desgracia, el programa de unificación para obtenerse finalmente una única teoría para el comportamiento humano que diseñó Newell no se está cumpliendo. Han trascurrido casi veinte años desde el comienzo del programa de Newell y la situación está detenida. Las distintas arquitecturas cognitivas \cite{ACT}\cite{brooks1986}\cite{SOAR}\cite{Copycat}\cite{kokinov94dual} que surgieron no consiguen demostrar que una sea más válida que las otras, por lo que entre estas teorías unificadas no ocurre la unificación predicha por Newell.

Otros intentos para lograr un mayor conocimiento de los comportamientos cognitivos han ido dirigidos a intentar unificar el conjunto de los comportamientos de los seres humanos con  determinados conjuntos de fenómenos de la física. Penrose\cite{Penrose1989} ha propuesto que las capacidades cognitivas humanas son consecuencia de las leyes de la física cuántica, aunque en su propuesta se excluye una unificación con los computadores actuales por carecer de mecanismos de computación cuántica que rompan el límite de la computación de Turing. Otra propuesta también en la misma línea, aunque esta propuesta sí incluye a los computadores es la de Doyle\cite{Doyle2006}. Doyle propone extender la mecánica clásica para explicar también los comportamientos que produce la mente humana.

Aunque las anteriores propuestas de unificación no han dado los resultados esperados, no todas las propuestas de unificación de las ciencias cognitivas han corrido la misma suerte. Un ejemplo de éxito en las ciencias cognitivas lo está dando la unificación entre la neurobiología y la psiquiatría, donde los últimos avances y descubrimientos científicos avalan fuertemente esta línea de trabajo\cite{Llinas1999}.

\subsection{ La Necesidad de una Definición Intensiva }

El conjunto formado por los comportamientos de los seres humanos es sin duda un conjunto de fenómenos para el que resulta muy interesante la pro\-puesta de desarrollar una teoría que logre explicarlos. Pero ese conjunto no es más que un subconjunto del conjunto que contiene todos los comportamientos. Durante el siglo veinte trabajos como los de Wolfgand Köhler y Edgard C. Tolman con chimpancés\cite{Kohler1925}, o los de Irene Peperberg \cite{Pepperberg1990}con loros grises han revelado que los comportamientos de otros animales no son triviales, haciendo que naciera la disciplina de la psicología animal.  Pero también el conjunto de comportamientos de todos los animales es un subconjunto de todos los comportamientos que pueden encontrarse en la naturaleza. Faltarían los comportamientos que son capaces de desarrollar el resto de organismos y sistemas no biológicos, como las plantas, robots e incluso las bacterias\cite{Tagkopoulos2008}. Por ejemplo, la planta de la judía es capaz de aprender ciclos de luz y mover sus hojas en función de estos ciclos, o la Drosera Byblis Drosophyllum, una planta carnívora, es capaz de reconocer si un alimento le vale o no, o incluso los comportamientos de los robots no pueden ser ignorados.

Pero la cuestión de unificar todos los conjuntos de comportamientos no es el único problema de unificación pendiente que tienen las ciencias cognitivas. Otro importante problema es la compatibilidad de las teorías de las ciencias cognitivas con las leyes físicas del nivel microscópico. Desde la postura científica se considera que la evolución del estado macroscópico de la naturaleza es consecuencia de las leyes que determinan la evolución del estado microscópico de la naturaleza. Pero ninguna de las teorías que se usan actualmente para explicar los distintos conjuntos de comportamientos enlaza con el nivel microscópico.

Con el objetivo de tratar de afrontar los dos problemas anteriores, es necesario dar un definición más precisa de los fenómenos a los que se hace referencia en este artículo, ya que el término  ``comportamientos cognitivos''\-  es impreciso y no conecta con el nivel microscópico de la naturaleza. Su imprecisión viene dada por dos motivos. El primer motivo es que el término ``cognitivos'' sólo hace referencia a un listado de comportamientos a los que se ha decidido dar este calificativo, por lo que si se presenta un nuevo fenómeno se debe decidir por consenso con el resto de la comunidad científica si se incluye en el conjunto de los comportamientos cognitivos; pero no hay una definición que permita decir si es un comportamiento cognitivo. El segundo motivo es que actualmente el término ``comportamientos'' se  emplea en muchos ámbitos de la ciencia, por ejemplo para los materiales, los sistemas dinámicos,... etc. Para eliminar la primera causa de imprecisión es necesario sustituir la definición extensiva del conjunto de los comportamientos cognitivos por una definición intensiva. La segunda causa de imprecisión se solventa  denotando a este conjunto de fenómenos mediante un nuevo término. El término elegido es ``exocomportamientos'' y la definición que fija los fenómenos a los que se va a hacer referencia en este trabajo se hace en función de la energía interna de un sistema, la cual pertenece al nivel microscópico de la naturaleza.

\begin{defNoNum}[Exocomportamiento]
Un exocomportamiento es la secuencia de cambios en el valor de las propiedades macroscópicas medibles de un sistema mediante su energía interna.
\end{defNoNum}

\subsection{Unificación de los Exocomportamientos}

 En consecuencia a lo mencionado, la segunda propuesta que se quiere plantear en este artículo es el problema de la unificación de los exocomportamientos. Este problema se enuncia de la siguiente manera:

\begin{defNoNum}[El problema de la Unificación de los Exocomportamientos]  Definido el conjunto de los exocomportamientos, ¿ existen un mismo conjunto de principios, conectado con el nivel microscópico de la naturaleza, que permita inferir la secuencia de actos en la que consiste cada exocomportamiento que realiza un sistema biológico o no biológico ?
\end{defNoNum}

Si se lograra encontrar una solución positiva al problema de la unificación de los exocomportamientos, esta consistiría en una teoría científica que explique los exocomportamientos de todos los organismos biológicos y todos los sistemas no biológicos y que enlace con el nivel microscópico de la naturaleza. Así, a diferencia de los anteriores retos de unificación de las ciencias cognitivas, el problema no sólo consiste en lograr una unificación entre conjuntos de exocomportamientos, sino establece una conexión con el nivel microscópico de la naturaleza.

Ahora bien, si se ha fracasado hasta ahora en el intento por construir una teoría para el conjunto de exocomportamientos de los seres humanos, ¿por qué se habría de tener éxito en el problema de la unificación de los exocomportamientos?  A esta pregunta hay una doble respuesta. Lo primero que se debe de explicar, es que lo que verdaderamente ha fracasado ha sido el intento de usar analogías. John Doyle en su libro\cite{Doyle2006}  Extending Mechanics to Minds realiza en el capítulo tercero un examen  de las analogías que se han usado en las ciencias cognitivas con el objetivo de permitir al científico comprender y razonar sobre el pensamiento humano. Doyle manifiesta que cada intento de usar una analogía (biológica, antropológica, teológica, química, dinámica, energética, termodinámica y de máquina) para fundamentar el comportamiento cognitivo ha fracasado. Las analogías pueden ser útiles como primera aproximación cuando la ciencia se acerca a un fenómeno desconocido; pero es difícil que esa aproximación, que se realiza mediante una extrapolación de un campo de la naturaleza hacia otro, sea una sólida explicación. Valga de ejemplo la analogía que usaron los pitagóricos cuando intentaron explicar con las proporciones que dictaban la generación de música en los instrumentos el movimiento de los planetas. Sin duda, la explicación de que los planetas se mueven para cumplir las proporciones nada tiene que ver con la teoría de Newton y ni que decir con la de Einstein. Otro ejemplo de esto, nos lo brinda también la física cuando se usaron analogías para desarrollar los modelos de los átomos, primero con el del pudin de Thomson y luego con el modelo gravitatorio de movimiento de los planetas para el mo\-delo del átomo de Rutherford. Sin duda la naturaleza es tan peculiar que las analogías de poco más que de primeras aproximaciones pueden servir cuando se cambia de un aspecto a otro de la naturaleza.

Lo segundo, es que es cierto que no se han encontrado unos principios comunes que expliquen todos los comportamientos de los seres humanos; pero la causa de no encontrarse esos principios podría residir en el propio problema. El motivo de esta afirmación es debida a que no se puede identificar la causa para un conjunto de fenómenos si el conjunto de fenómenos es un subconjunto de todos los fenómenos que tienen una misma causa y en la búsqueda de la causa se mantiene la suposición errónea de que el conjunto no es un subconjunto. La razón es que la causa que se proponga se desechará debido a que existirán fenómenos que se puede explicar con la causa propuesta pero que caerán fuera del subconjunto. Este caso se puede dar cuando la definición del conjunto de fenómenos que se quiere explicar se hace en base a la experiencia sensible. Por ejemplo, la definición popular de calor que está basada en la experiencia sensible no acepta que un cubito de hielo pueda generar calor, así muchos fenómenos quedan sin poder ser explicados con razonamientos basados en la experiencia sensible. En cambio, la física ha definido en términos absolutos de la naturaleza un concepto denominado calor. Según el concepto de calor de la física un cubito de hielo pueda generar calor, a pesar de que esa noción vaya contra la experiencia sensible del ser humano. Así, a cambio de aceptar la noción de calor definida por la física se logra tener razonamientos para explicar fenómenos que antes no se podían desde la experiencia sensible. Por lo tanto, el elegir un conjunto de fenómenos en base a la experiencia sensible conduce a la imposibilidad de diferenciar principios fundamentales que posee la naturaleza de consecuencias de estos. De esta manera, podría ser que el conjunto de exocomportamientos humanos sea sólo explicable cuando se posea una teoría para explicar todos los exocomportamientos de la naturaleza.

Pero este documento no se detiene en la presentación del problema de la unificación de los exocomportamientos, sino que se va a presentar una posible solución, la Teoría General del Exocomportamiento (TGE). La TGE, que es la tercera propuesta del documento, no se trata de ningún tipo de analogía, sino de una teoría científica que propone que los comportamientos son la manifestación de una propiedad emergente de la naturaleza.

\section{Definiciones y Postulados de la TGE }

En el razonamiento científico siempre se asume que los fenómenos son la manifestación de propiedades que existen en la naturaleza. Entre el concepto de fenómeno y propiedad existe una dualidad que a veces hace que se haga un abuso de los términos, y sean intercambiados al hablar sobre aspectos de la naturaleza. Así que a continuación, se intentará dejar claro cada uno de estos dos conceptos y su relación. Los fenómenos son cambios que se producen en las propiedades medibles de los sistemas. La ciencia propone que esos fenómenos son consecuencia de la existencia de propiedades intrínsecas en los sistemas, y por lo tanto, los fenómenos delatan la existencia de propiedades.  Puesto que las propiedades son intrínsecas no hay una mane\-ra directa de conocer (medir) las propiedades, pero sí puede ser conocida (medida) de manera indirecta a través del fenómeno. Ejemplos de esta dualidad entre propiedad y fenómeno en física son temperatura y calor, carga eléctrica y fuerza eléctrica, o masa y gravedad.

Una vez se ha definido una propiedad, debe de  enunciarse un conjunto  finito de hechos fundamentales y universales que relacionen la propiedad con los fenómenos. A cada una de las afirmaciones de ese núcleo de hechos se le denomina postulado. Los hechos que se enuncien serán fundamentales si a partir de ellos se derivan otros hechos de la naturaleza; pero ellos no deben de poder ser derivados de otros. Los postulados son universales porque deben en cumplirse siempre en la naturaleza. Sin entrar en profundidad en los conceptos del método científico y la filosofía de la ciencia, se ha de mencionar que los postulados de una teoría científica no pueden probarse que son hechos universales de la naturaleza; pero se asumen como tales, ya que siempre que se han puesto a prueba en la naturaleza no han podido falsearse. Puesto que los postulados son considerados hechos fundamentales, sobre ellos se puede aplicar un proceso deductivo que permite inferir nuevo conocimiento sobre la naturaleza, o explicar hechos de la natu\-raleza. La explicación que aporta una teoría científica sobre la naturaleza sigue vigente mientras los postulados de la teoría no se falseen y las consecuencias que se deducen de ellos se ajusten a lo que ocurre en la naturaleza.

En esta sección se establecerán las bases de la TGE. Primero se definirán una serie  de términos y relaciones elementales. Entre los términos fundamentales se definirá el conjunto de fenómenos que se quiere llegar a explicar, los exocomportamientos, y una propiedad, la fasa, que se fijará como causa de ese conjunto de fenómenos. A conti\-nuación, se establecerán los postulados de la teoría. Los últimos apartados de la sección se dedicarán a presentar consecuencias que se pueden deducir de las definiciones, como el teorema fundamental de los comportamientos.

Para que los postulados puedan ser interpretados correctamente se deben fijar las definiciones de los términos y relaciones  que aparecen en ellos. Así,  este apartado será usado para fijar esas definiciones, y algunas otras que se usarán durante todo el artículo.

\subsubsection {Términos Primitivos de la TGE}

Los términos primitivos que maneja la TGE son los siguientes:

\begin{definicion}[Sistema] Un sistema es un agregado de objetos físicos entre cuyas partes existe una relación. La relación que define el sistema puede ser física o lógica. Puede tener complejidad biológica o no.
\end{definicion}

\begin{definicion}[ Subsistema] Un subsistema es un sistema que pertenece a un conjunto de sistemas entre los que existe una relación que los liga como un sistema.
\end{definicion}

\begin{definicion}[ Estado de un sistema (Subsistema)] Un estado de un sistema es cada una de las formas físicamente distinguibles que puede adoptar un sistema (subsistema).
\end{definicion}

\begin{definicion}[ Universo de un sistema $\mathbf{S}$] El universo de un sistema $\mathbf{S}$ es el  sistema cerrado que contiene al sistema  $\mathbf{S}$.
\end{definicion}

\begin{definicion}[ Estado realidad del universo en el instante $\mathbf{t}$] El estado realidad del universo en el instante $\mathbf{t}$ es el estado del universo en el que se encuentra el universo en el instante de tiempo $\mathbf{t}$. Al estado realidad del instante t se le denotará por la etiqueta $\mathbf{r}_{\mathbf{t}}$.
\end{definicion}

\begin{definicion}[Exocomportamiento de un sistema $S$]Un exocomportamiento de un sistema $S$ es la secuencia de cambios en el valor de las propiedades macroscópicas medibles del sistema $S$ mediante su energía interna ordenados en el tiempo.
\end{definicion}

\begin{definicion}[ Acto de un sistema $S$ en un instante $t$] El acto de un sistema $S$ en un instante $t$ es un proceso que cambia el valor de las propiedades macroscópicas medibles del sistema $S$ en el instante $t$ mediante la energía interna del propio sistema $S$ .
\end{definicion}

\begin{definicion}[ Fasa] La fasa es una propiedad de los sistemas que se manifiesta en los exocomportamientos de los sistemas. El valor de la fasa de un sistema $S$ no es numérico sino que es descrito por una función, a la que se denotará por $f_{s}$.
\end{definicion}

\begin{definicion}[ Exocomportamiento posicional] Un sistema lleva a cabo un exocomportamiento posicional cuando el valor de la fasa del sistema es únicamente de carácter  posicional.
\end{definicion}

\begin{definicion}[ Fasa de carácter posicional] El valor de la fasa de un sistema es de carácter posicional  cuando cada acto de la secuencia es función de su posición en la secuencia. Se denotará por  $f_{s}^{P}$.
\end{definicion}

\begin{definicion}[ Exocomportamiento aleatorio] Un sistema lleva a cabo un exocomportamiento aleatorio cuando el valor de la fasa del sistema es únicamente de carácter  aleatorio.
\end{definicion}

\begin{definicion}[ Fasa de carácter aleatorio] El valor de la fasa de un sistema es de tipo aleatorio cuando cada acto de la secuencia es función de una variable aleatoria independiente del estado del universo. Se denotará por  $f_{s}^{A}$. Si la variable aleatoria puede ir cambiando entonces cada variable aleatoria de la secuencia que forman las variables aleatorias en el tiempo debe cumplir que sea seleccionada a su vez por una variable aleatoria independiente del estado del universo.
\end{definicion}

\begin{definicion}[ Exocomportamiento sensible] Un sistema lleva a cabo un exocomportamiento sensible cuando el valor de la fasa del sistema es únicamente de carácter  sensible.
\end{definicion}

\begin{definicion}[ Fasa de carácter sensible] El carácter sensible de la fasa de un sistema es la capacidad de la fasa para generar una secuencia de actos, donde cada acto que contiene la secuencia es función del estado realidad del universo. Se denotará por  $f_{s}^{S}$.
\end{definicion}

\begin{definicion}[ Universo con sensibilidad a las condiciones iniciales] Un universo tiene sensibilidad a las condiciones iniciales si sus estados independientemente de su similitud seguirán evoluciones significativamente diferentes.
\end{definicion}

\begin{definicion}[ Exoactivo] Un sistema está en un estado exoactivo cuando el sistema es capaz de realizar un exocomportamiento. Cuando un sistema es incapaz de realizar un exocomportamiento  se dice que el sistema esta exoinactivo.
\end{definicion}

\begin{definicion}[ Restricciones de exoactividad de un Universo] Las restricciones de exoactividad de un universo es el conjunto de reglas que dictan cuando un sistema pasa de un estado exoactivo a estar exoinactivo.
\end{definicion}

La definición de exocomportamiento permite definir un conjunto de fenómenos donde se encuentran desde los comportamientos de los seres humanos hasta los de los robots, pasando por plantas y bacterias. Por ejemplo, una bacteria, la Escherichia coli, realiza movimientos atractivos y repulsivos. La bacteria altera su posición en el espacio a costa de una pérdida de energía interna, por lo tanto las alteraciones de la posición de la bacteria sí son un exocomportamiento. Pero se debe notar que según la definición de exocomportamiento no toda secuencia de alteraciones de las propiedades observables de un sistema es un exocomportamiento. Recuérdese que la energía de un sistema físico es la suma de su energía potencial externa, su energía cinética externa, su energía potencial interna y cinética interna. A continuación, se expondrán los tipos de fenómenos que no pertenecen al conjunto de los exocomportamientos:

\begin{itemize}
  \item FENÓMENOS DE TRANSFORMACIÓN DE ENERGÍA POTENCIAL EXTERNA A CINÉTICA EXTERNA.
  \newline
  Los fenómenos en los que hay una transformación de la energía potencial externa de un sistema en energía potencia interna no son fenómenos que entren dentro del conjunto de los exocomportamientos. El movimiento de una partícula es el resultado de la transformación de la energía potencial externa que posee la partícula debida a los campos eléctricos que generan el resto de partículas del universo en energía cinética externa.

    \item FENÓMENOS DE TRANSFORMACIÓN DE ENERGÍA CINÉTICA EXTERNA A  POTENCIAL EXTERNA.
    \newline
  Los fenómenos en los que hay una transformación de la energía energía potencial interna de un sistema en  externa no son fenómenos que entren dentro del conjunto de los exocomportamientos.

  \item FENÓMENOS DE INTERCAMBIO DE ENERGÍA CINÉTICA EXTERNA.
  \newline
  Los fenómenos de intercambio de energía externa no son exocomportamientos. Por ejemplo, la alteración de la trayectoria de un cuerpo por el choque con otro cuerpo. Este fenómeno tampoco pertenece al conjunto de los comportamientos. La razón es que la alteración se produce por las energías cinéticas externas del sistema.

  \item FENÓMENOS DE TRANSFORMACIÓN DE ENERGÍA POTENCIAL INTERNA A  CINÉTICA INTERNA.
  \newline
  Los fenómenos en los que se transforma energía potencial interna en energía cinética interna, por ejemplo la corriente eléctrica, no es un fenómeno que entre dentro del conjunto de los exocomportamientos. Pero el intercambio de calor que realiza el sistema con el universo producido por la corriente eléctrica si es un exocomportamiento.

  \item FENÓMENOS DE TRANSFORMACIÓN DE ENERGÍA EXTERNA A  ENERGÍA INTERNA.
  \newline
  El fenómeno en el que un sistema adquiere energía interna no es un exocomportamientos, bien sea por trabajo o calor.

  \item FENÓMENOS SIN TRANSFORMACIÓN DE ENERGÍA.
  \newline
   Las fenómenos en los que no hay ninguna transformación de energía interna en energía cinética externa no son exocomportamientos. Por ejemplo, un movimiento rectilíneo uniforme sin rozamiento no es un exocomportamiento, ya que un cuerpo que se mueve en un movimiento rectilíneo uniforme sin resistencia no usa su energía interna para mantener la velocidad constante.

\end{itemize}

Otra cuestión relevante sobre las definiciones que se han establecido, es que siguiéndose el razonamiento científico se ha postulado la existencia de una propiedad que causa los comportamientos. Por lo tanto, existe una relación entre cada uno de los exocomportamientos y  una propiedad denominada fasa. Esa relación toma la expresión ecuacional siguiente:

\[
\varepsilon_{s}(t)=\rho f_{s}(p_{0},...,p_{m})   \qquad m\geq 0
\]

\begin{itemize}
  \item $\varepsilon_{s}$ es el exocomportamiento del sistema-s.

  \item $\varepsilon_{s}(t)$  es el acto que realiza el sistema-s en el instante del tiempo t.

  \item $\rho$ es la relación que existe entre la fasa y el exocomportamiento de un sistema.

  \item $f_{s}$ es el valor  que tiene la propiedad fasa del sistema-s.

  \item $p_{0},...,p_{m}$ son los valores que toman los parámetros de la función  $f_{s}$.

\end{itemize}
Se debe tener en cuenta que, por la definición de los distintos tipos de carácter de la propiedad fasa, los tipos de carácter no se anulan entre sí; a diferencia de lo que ocurre con los tipos de carácter de otras propiedades de la naturaleza como la carga eléctrica.

\subsubsection { Relaciones Primitivas de la TGE}

A continuación, se definen dos relaciones fundamentales para la TGE que relacionan términos primitivos definidos en el apartado anterior.

 \begin{definicion} [Representación] Un sistema, $S$, tiene una representación si el sistema $S$ contiene un subsistema $R$ y existe una función, $r$, cuyo dominio es el conjunto de estados del universo $U$ y su codominio es el conjunto de estados del subsistema $R$. El número de estados del subsistema $R$ que conforman la imagen de la función $r$ tiene que ser mayor o igual que $2$.
 \end{definicion}

En un primer momento podría considerarse que el dominio de la función $r$  debería ser un conjunto de subestados y no de estados, argumentándose que con la definición actual, si $S$ modificara su posición sin que hubiera modificaciones en el resto de sistemas la definición obligaría a asociar un nuevo estado de $R$ al mismo estado del dominio, con lo que r dejaría de ser una función. Pero ese razonamiento es incorrecto, ya que el estado del universo incluye al sistema $S$, de modo que, si $S$ modifica su posición, el universo se encuentra en otro estado, y es a ese nuevo estado al que se le asocia el nuevo elemento del codominio.

\begin{definicion} [ Mecanismo de Computación]Un sistema, $S$, tiene un mecanismo de computación: Un sistema, $S$, tiene un mecanismo de computación cuando tiene un subsistema, $C$, y posee un proceso que al aplicarse sobre un estado de su subsistema $C$ lo transforman en otro estado. El proceso es equivalente a una función que tiene como dominio y codominio los estados del subsistema $C$.
\end{definicion}

Como se puede observar en la anterior definición, la computación se ve como una función que manda los elementos del dominio a sus respectivas imágenes en el codominio. Así, cuando la TGE habla de computación lo hace usando la  noción de función. Este enfoque funcional fue usado originariamente en los trabajos de Kurt Gödel, Alonzo Church, Alan Turing y Stephen Kleene. La ventaja del enfoque funcional para definir computación es que permite abstraerse de un mecanismo de computación concreto, lo que permite que los resultados que se alcancen sean independientes del mecanismo computacional, ya sea una maquina de Turing, una red neuronal,...,etc. Este enfoque se usó para desarrollar la teoría de la recursividad, pero a diferencia del concepto de computabilidad de esa teoría, en la TGE se identifica computabilidad completamente con la noción de función, esto implica que se considera también computación a funciones que no son computables por una máquina de Turing, lo que se conoce por hipercomputación\cite{Copeland1999}.

\subsection{ Postulados de la TGE }

A continuación, se van a presentar los dos postulados que constituyen el núcleo de la TGE. La cuestión del desarrollo de experimentos que permitan poner los postulados a prueba se tratará en la sección 5.

\subsubsection{Postulado primero} Según las definiciones que se han establecido existen tres conjuntos de comportamientos elementales; pero se desconoce si en el conjunto de los exocomportamientos podría existir otro subconjunto de exocomportamientos elementales diferente a los existentes. Si existiera otro conjunto de exocomportamientos elementales en las definiciones entonces ese hecho delataría la existencia de otro tipo de carácter para la propiedad fasa. El siguiente postulado aclara esa cuestión.

\begin{postulado}[Postulado de los exocomportamientos elementales]
En la naturaleza no existen más de tres tipos elementales de exocomportamientos: los exocomportamientos posicionales, los exocomportamientos aleatorios y los exocomportamientos sensibles.
\end{postulado}

El significado de este postulado puede ser visualizado muy fácilmente con diagramas de conjuntos.  El modo de visualizarlo consiste en una superficie negra en la que se representa cada comportamiento por un punto con un color.  Los exocomportamientos elementales se pintan con un color primario: Un exocomportamiento aleatorio se pintan con un punto de color rojo, un exocomportamiento posicional  con un punto de color verde y un exocomportamiento sensible de color azul. Los  diagramas que contienen comportamientos aleatorios, posicionales y sensibles son disjuntos entre ellos. Por otro lado el resto de puntos tiene un color que es combinación de los tres colores primarios, representando que ese exocomportamiento es combinación de los tres tipos de carácter de la fasa. El diagrama que aparece se muestra en la figura \ref{fig:post1}.
\begin{center}
\begin{figure}
  \centering
    \includegraphics{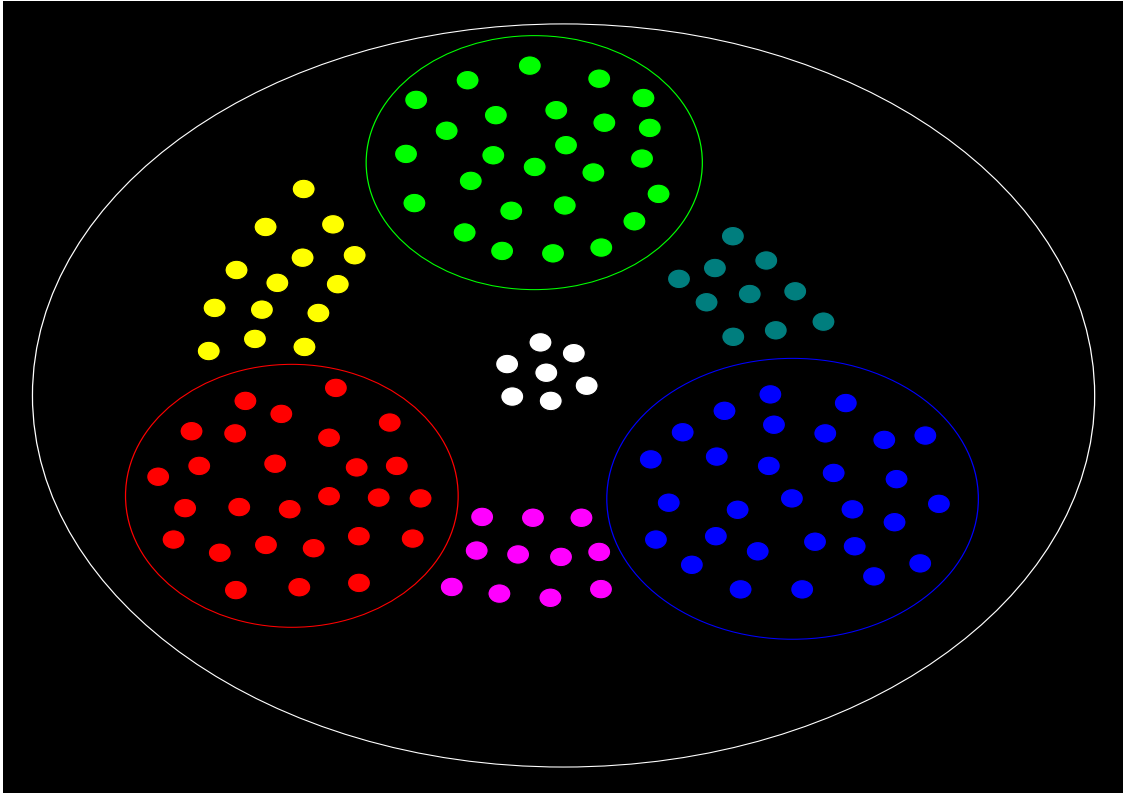}
  \caption{En el diagrama cada exocomportamiento se representa con un punto cuyo color expresa el tipo de exocomportamiento. Los exocomportamientos elementales tienen asignados los colores primarios. Los exocomportamientos aleatorios se pintan con el color rojo, los posicionales con verde y los sensibles con azul. Los conjuntos de exocomportamiento elementales son disjuntos entre ellos.}
  \label{fig:post1}
\end{figure}
\end{center}

Antes de terminar este apartado se debe aclarar una cuestión para comprender exactamente cuáles son los elementos que contiene el conjunto de exocomportamientos sensibles. La cuestión gira en torno a que no se debe confundir el mecanismo que realiza los actos con el mecanismo que decide los actos, ya que, es el mecanismo de decisión de un sistema, y no el mecanismo de actuación, el que se usa para clasificar el comportamiento del sistema. Para eliminar cualquier duda, tómese de ejemplo, la Escherichia coli, una bacteria perítrica. Esta bacteria perítrica usa un mecanismo de actuación de tipo aleatorio debido a que su locomoción se basa en procesos estadísticamente sesgados de modo que el resultado final le permite poder desplazarse en una dirección u otra. En la TGE el exocomportamiento de una bacteria perítrica, como la citada anteriormente, es catalogado como exocomportamiento sensible. La razón es que dado un concreto entorno químico la distribución de probabilidad sesgada es una concreta, ya que las distintas distribuciones están fijadas a las sustancias químicas que se detectan. Por lo tanto, el método de decisión no elige  la distribución de probabilidad, que permite a la bacteria moverse en una dirección, ni de una manera aleatoria, ni preestablecida.

\subsubsection{Postulado segundo}

\begin{postulado}[Postulado de la Representación]
En universos con sensibilidad a las condiciones iniciales y restricciones de exoactividad, los sistemas cuyos exocomportamientos sean posicionales, o aleatorios terminarán exoinactivos, y sólo lograrán permanecer exoactivos los sistemas donde los actos que componen sus exocomportamientos se producen en función de una representación del estado realidad del universo que se adecua a las restricciones de exoactividad.
\end{postulado}

Este segundo postulado es una extensión de las reflexiones y propuestas del profesor Rodolfo Llinás\cite{Llinas1987}\cite{Llinas2001}, uno de los padres de las neurociencias, para dar respuesta  a la pregunta de cuál es la razón de la aparición del cerebro en los seres vivos. Llinás ha sugerido la hipótesis de que el cerebro ha surgido debido a que los peligros que trae consigo moverse son determinantes. Así, el organismo que se mueve sin usar una representación de su entorno, o él que usa una representación incorrecta, se extingue rápidamente. Por lo tanto, se puede decir que, entre los organismos con capacidad de moverse la selección natural elige aquellos que hacen uso para moverse de una representación correcta del estado de su entorno.

El profesor Llinás sostiene esa hipótesis conforme a ciertos hechos del mundo de la biología. Uno de estos hechos son los tunicados. Los tunicados son unos seres que viven toda su fase adulta sujetos a un objeto estable en el mar. El tunicado en fase adulta lleva a cabo dos funcione básicas: Alimentarse mediante el filtrado del agua de mar y reproducirse. La larva del tunicado tiene durante una cantidad de tiempo pequeña (un día o menos) capacidad para nadar libremente y un pequeño cerebro. El primitivo sistema nervioso del tunicado permite recibir información sensorial sobre el entorno que le rodea, de manera que sabe lo que es arriba, abajo, izquierda, derecha.... Cuando la larva encuentra un lugar adecuado se posa y queda sésil en ese lugar.
Una vez ocurre esto, la larva pasa al estado adulto. En este proceso la larva digiere literalmente gran parte de su cerebro, dejando tan sólo lo necesario para la simple actividad de filtrado de agua. La conclusión que saca el profesor Llinás de este hecho es clara: ``el cerebro se desarrolla para que el ser vivo que se mueve comprenda el entorno que le rodea y pueda moverse autónomamente sin perecer''. La capacidad de moverse es muy positiva pero Llinás también explica que es algo muy peligroso. Si un ser se mueve pero no sabe donde se mueve puede terminar en la boca del que se lo va a comer. El tunicado es la pieza clave en la teoría de la evolución de los cordados, ya que es a partir de su estado larvario del que evolucionan los cordados. Así pues, la razón del tunicado es la razón principal por la que haya seres vivos con cerebro, esto es, para comprender su entorno.

Si se piensa durante un instante a qué conjunto de exocomportamientos de la clasificación de la TGE pertenecen los exocomportamientos de los cordados se obtiene que se trata de exocomportamiento sensible. Eso es debido a que los actos que realizan los cordados son función de la representación que poseen en su cerebro del estado de su entorno.  Ahora bien, el problema de la unificación de los exocomportamientos no consiste tan sólo en los exocomportamientos de los cordados, sino el de cualquier sistema. Por lo tanto, la hipótesis el profesor Llinás debe ser generalizada para que abarque el exocomportamiento de cualquier sistema y sin que contradiga el propio enunciado del profesor Llinás.

La extensión se hace en base a la definición de representación que contiene la TGE, ya que la definición de representación es independiente del mecanismo concreto, y se basa en que exista una función que asocie los estados del ambiente con los de un subsistema. Así, la TGE considera que el cerebro es uno de los mecanismos que la naturaleza puede usar para albergar una representación del entorno pero no el único. El dominio del mecanismo del cerebro frente a otros mecanismos de la naturaleza para albergar una representación vendría dado por sus mejores características, ya sean energéticas, computacionales o una mezcla de ambas.  El trabajo del profesor Llinás dice que, los organismos con capacidad de moverse y cerebro son seleccionados frente a los organismos con capacidad de moverse que no tienen una representación. Por lo tanto, que otros sistemas tengan un mecanismo para albergar una representación diferente del cerebro y sean también seleccionados por la selección natural junto a los que albergan la representación en el cerebro no contradice la hipótesis del profesor Llinás y logra generalizarla.

Existen dos cuestiones más que deben quedar claras sobre el postulado. La primera es que cuando se usa la expresión \emph {``es función de''}, se hace referencia a la definición de mecanismo computacional, donde se fijaba que en la TGE se identificaría el concepto de mecanismo de computación con el de función. Por lo tanto, el postulado fija que el sistema que realiza el comportamiento sensible posee un mecanismo de computación que usa una representación del universo para obtener la secuencia de actos  del comportamiento.

La otra cuestión que debe de conocerse es que cuando en el postulado se dice \emph {``representación del estado realidad del universo que se adecua a las restricciones de exoactividad'' } significa que la representación contiene suficiente información para persistir y no hay información incorrecta sobre las restricciones de exoactividad. Más adelante este tema será tratado de manera más precisa.

\subsection{ Teorema Fundamental de los Exocomportamientos}

En este apartado se va a presentar un resultado que se deriva de las definiciones de la TGE, y al que se recurrirá en posteriores secciones, ya que su demostración llevará a establecer relaciones matemáticas que contienen las definiciones de los distintos tipos de carácter de la fasa de un sistema y su exocomportamiento. La razón de la necesidad de este teorema es que el postulado primero sólo fija que en la naturaleza no hay más de tres tipos de carácter de la fasa, pero no se sabe si podría haber menos de tres tipos de carácter. Eso podría ocurrir en el caso de que para alguna de las definiciones no hubiera ningún exocomportamiento que la cumpliera.

\begin{theorem}[Teorema fundamental de los exocomportamientos]
Ninguno de los conjuntos de exocomportamientos elementales es vacío.
\end{theorem}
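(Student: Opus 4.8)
The plan is to prove the statement by exhibiting, for each of the three elementary classes, at least one concrete system whose exobehavior falls in that class; since a set that contains a witness cannot be empty, this suffices. Before constructing the witnesses I would make explicit, from the relational equation $\varepsilon_{s}(t)=\rho f_{s}(p_{0},\dots,p_{m})$ and from the definitions of the three characters, the functional shape that the act at instant $t$ must take in each case: $\varepsilon_{s}(t)=\rho f_{s}^{P}(t)$ when the only parameter is the position of the act in the sequence; $\varepsilon_{s}(t)=\rho f_{s}^{A}(X_{t})$ with $X_{t}$ a random variable independent of the state of the universe (and, when the $X_{t}$ vary, themselves selected by a further such independent random variable); and $\varepsilon_{s}(t)=\rho f_{s}^{S}(r_{t})$ when the parameter is the reality state of the universe. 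With these three templates fixed, the task reduces to producing one system that realizes each template while genuinely performing an exobehavior, that is, changing a measurable macroscopic property at the expense of its own internal energy.

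For the positional class I would take a battery- or spring-driven device that runs through a fixed, pre-stored cycle of motions, the $t$-th motion depending only on the index $t$ (a mechanical clock, a music box, or a program emitting a fixed output stream): each act is a function of its position in the sequence, and the displacement is paid for out of the internal energy store of the device, so the sequence is an exobehavior and it is positional. For the random class I would take a system coupled to a physical noise source that may be regarded as independent of the macroscopic state of the universe (thermal noise, shot noise, or a quantum source) and that uses the sampled value to select its next act --- for instance, a small robot that steps left or right according to the sign of the sampled variable, keeping one fixed random variable throughout so that the recursive clause of the definition is trivially met; again the step consumes internal energy, so this is a random exobehavior. For the sensible class the witnesses are already at hand in the text: any chordate, the free-swimming tunicate larva, the peritrichous \emph{Escherichia coli}, or even a sensor-equipped thermostat-like robot, all of which produce acts that are a function of a representation of the reality state of the universe and expend internal energy to do so; hence the sensible class is non-empty as well.

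The main obstacle is not the existence of candidates but the verification that each candidate sits in exactly the intended class and in no other --- in particular, that the claimed parameter really is the only parameter of $f_{s}$. For the positional witness one must argue that no residual dependence on the state of the universe or on a random variable creeps in, so the device must be shielded and deterministic; for the random witness one must argue that the noise source is genuinely independent of the state of the universe, which is where the definition of the random character of the fasa is most delicate; and for every witness one must check it against the list of phenomena that are explicitly \emph{not} exobehaviors (pure external-to-external or external-to-internal energy transformations, frictionless uniform motion, and so on), confirming that the macroscopic change is driven by the conversion of internal energy. Once these checks are carried out for the three witnesses, none of the three elementary sets can be empty, which is precisely the assertion of the theorem.
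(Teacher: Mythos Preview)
Your proposal is correct and follows the same overall strategy as the paper: make explicit, from the definitions, the functional template that $f_{s}$ must take in each elementary case, and then exhibit one witness per class. The paper likewise reduces the theorem to three lemmas, one per elementary set, and writes down essentially the same three templates you derive (with $\psi_{r}$, a representation of the reality state, in place of your $r_{t}$ for the sensible case).

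The one genuine methodological difference is in the choice of witnesses. The paper insists that to certify the type of an exobehavior one must know \emph{perfectly} how it arises, and argues that the only reliable way to secure this is to \emph{construct} the exobehavior oneself; accordingly, all three witnesses in the paper are programmed robots (a robot driven by a random-number generator, a robot executing the digit sequence of a fixed number such as $\pi$, and a robot whose acts are a function of its camera input). You instead mix constructed devices with natural systems (chordates, the tunicate larva, \emph{E.~coli}) for the sensible class and appeal to physical noise sources for the random class. Your route buys you the extra, and useful, discussion of exclusivity and of the internal-energy criterion that the paper's proof leaves implicit; the paper's route buys full control over the decision mechanism, which sidesteps precisely the ``main obstacle'' you identify --- with a robot you program yourself there is no question of hidden parameters creeping into $f_{s}$. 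Either route closes the argument.
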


Este teorema junto al postulado primero establecen que existen sólo tres tipos de carácter de la fasa y que sólo existen tres conjuntos de exocomportamientos elementales. Para demostrar el teorema hay que demostrar que ninguno de los conjuntos de exocomportamientos elementales que se define es vacío. Para demostrar que un conjunto es no vacío basta con encontrar un elemento que pertenezca al conjunto. Es decir, al menos existe un sistema que lleva a cabo un exocomportamiento que pertenece al conjunto definido. Para demostrar este teorema hay que demostrar tres lemas. Cada uno de los lemas dice que existe al menos un elemento en su correspondiente conjunto de exocomportamientos elementales. Ahora bien, para probar de qué tipo es un exocomportamiento se debe conocer perfectamente cómo surge el exocomportamiento. La manera más sencilla de tener esa información es crear el exocomportamiento. Eso se puede lograr programando un robot. Así, la prueba del teorema se infiere directamente de los tres lemas que se van a demostrar a continuación.  Para demostrar cada lema se indicará como crear un programa que genere el exocomportamiento elemental correspondiente.

\begin{lemma}
El conjunto de exocomportamientos aleatorios es no vacío
\end{lemma}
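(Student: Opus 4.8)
El plan es dar una prueba constructiva: exhibir un sistema concreto --- un robot gobernado por un programa --- cuyo exocomportamiento pertenezca, por construcci\'on, al conjunto de los exocomportamientos aleatorios. Por la definici\'on de fasa de car\'acter aleatorio basta con que cada acto de la secuencia que ejecuta el sistema sea funci\'on de una variable aleatoria independiente del estado del universo, y con que el valor de la fasa del sistema sea \emph{\'unicamente} de ese car\'acter.

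Primero describir\'ia el sistema. Consid\'erese un robot m\'ovil alimentado por una bater\'ia interna, dotado de dos motores que modifican su posici\'on en el espacio y de una fuente de aleatoriedad f\'isica (por ejemplo, ruido t\'ermico o un detector de desintegraciones, cuyo resultado no viene determinado por el estado macrosc\'opico del universo). El programa es el bucle siguiente: en cada instante $t_{n}$ se toma una muestra $x_{n}$ de la fuente aleatoria y, seg\'un su valor, se activa uno u otro motor durante un intervalo fijo; el acto $\varepsilon_{s}(t_{n})$ queda as\'i determinado por $x_{n}$ y por nada m\'as (en la ecuaci\'on $\varepsilon_{s}(t)=\rho f_{s}(p_{0},...,p_{m})$ esto corresponde a tomar $f_{s}=f_{s}^{A}$ con $x_{n}$ como par\'ametro). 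Segundo, comprobar\'ia que esa secuencia es un exocomportamiento: cada activaci\'on de un motor modifica una propiedad macrosc\'opica medible del robot (su posici\'on) a costa de su energ\'ia interna (la bater\'ia), y los actos quedan ordenados en el tiempo, tal como exige la definici\'on.

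Tercero, comprobar\'ia que la fasa es de car\'acter aleatorio y \emph{s\'olo} de ese car\'acter. Es aleatorio porque $\varepsilon_{s}(t_{n})$ es funci\'on de la variable aleatoria que modela $x_{n}$, y, al ser las muestras independientes entre s\'i y de todo lo externo, la sucesi\'on de esas variables aleatorias satisface la cl\'ausula de la definici\'on (en el caso m\'as simple todas comparten distribuci\'on; si se quisiera variarla, la nueva distribuci\'on se escoger\'ia tambi\'en con la fuente aleatoria). No hay componente posicional porque la regla de decisi\'on no depende del \'indice $n$ ni de ning\'un contador, de modo que el acto no es funci\'on de su posici\'on en la secuencia. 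No hay componente sensible porque la regla s\'olo consulta $x_{n}$ y ning\'un sensor del entorno, de modo que el acto no es funci\'on del estado realidad del universo. Por tanto este exocomportamiento pertenece al conjunto de los exocomportamientos aleatorios, que en consecuencia es no vac\'io.

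El obst\'aculo principal que preveo est\'a en justificar el car\'acter ``aleatorio e independiente del estado del universo'' de la fuente: hay que argumentar que $x_{n}$ es una variable aleatoria genuina y no la salida de un algoritmo pseudoaleatorio determinista, lo que obliga a apoyarse en una fuente de origen f\'isico --- idealmente cu\'antico --- y a precisar en qu\'e sentido se cumple la independencia. Un obst\'aculo menor, pero que conviene cerrar expl\'icitamente, es descartar que el mismo exocomportamiento pueda reclasificarse como sensible o posicional; para ello basta con insistir en que el mecanismo de decisi\'on del robot est\'a aislado del entorno y no guarda memoria de los actos ya realizados.
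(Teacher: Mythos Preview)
Your proposal is correct and follows essentially the same constructive route as the paper: exhibit a robot whose program selects each act via a random generator, so that by construction the resulting sequence is an exocomportamiento aleatorio. Your version is in fact more careful than the paper's brief proof --- you explicitly check the internal-energy condition for being an exocomportamiento, argue that the fasa is \emph{\'unicamente} aleatoria by ruling out the posicional and sensible cases, and flag the pseudorandom-versus-physical-randomness issue, none of which the paper addresses.
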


\begin{proof}
Según la definición de exocomportamientos aleatorios y fasa aleatoria su relación se puede expresar de la siguiente manera:

\[
\varepsilon_{s}(t)=\rho f^{A}_{s}(X_{t})
\]

Donde  $\varepsilon_{s}(t)$ es una función que dicta que acto realiza en cada momento del tiempo el sistema-s, de manera que siendo $\mathbf{T}$ el conjunto de instantes del tiempo y $\mathbf{A}^{s}$ los actos que puede realizar el sistema-s, la función se define como:

\[
\begin{array}{cccc}
  \varepsilon_{s} & \mathbf{T} &\longrightarrow &\mathbf{A}^{s} \\
   & t & \mapsto & \mathbf{a}_{x}
\end{array}
\]

 La función $f^{A}_{s}$ es una función aleatoria y $X_{t} $ es una variable aleatoria independiente del estado del universo. Así, fijándose en la ecuación se puede crear un programa informático que use un generador de números aleatorios para decidir la acción que tiene que ejecutar en cada momento del tiempo el robot. Así, cuando el robot que ejecute el programa presentará un exocomportamiento aleatorio.

\end{proof}

\begin{lemma}
El conjunto de exocomportamientos posicionales es no vacío
\end{lemma}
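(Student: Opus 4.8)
The plan is to mirror the structure of the proof of the previous lemma: exhibit one concrete system whose exo-behavior provably lies in the set of positional exo-behaviors, which by definition of ``non-empty'' suffices. First I would write down, from the definitions of \emph{exocomportamiento posicional} and \emph{fasa de car\'acter posicional}, the equation governing such a system,
\[
\varepsilon_{s}(t)=\rho\, f_{s}^{P}(t),
\]
where $\varepsilon_{s}:\mathbf{T}\longrightarrow\mathbf{A}^{s}$ sends each instant $t$ to the act $\mathbf{a}_{x}$ executed then, and $f_{s}^{P}$ is a function whose only argument is the position of the act within the sequence --- so the act depends on $t$ and on nothing in the state of the universe. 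This makes explicit the mathematical relation that the lemma's witness must satisfy, which the remark at the start of this subsection announces as one purpose of the theorem.

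Next I would describe the construction. Program a robot that keeps an internal counter, initialized to zero and incremented at each step, and that at step $n$ executes the action $g(n)$ prescribed by a fixed, predetermined rule $g$ --- for concreteness a finite lookup table consulted cyclically, or any computable function of $n$ alone. Because the robot reads only its own step counter and never any sensor or external variable, every act of the resulting sequence is a function purely of its position in the sequence; hence the robot's fasa has only positional character and, by definition, its exo-behavior is positional. This produces an explicit element of the set, so the set is not empty, and the lemma follows exactly as the random case followed from its random-number-generator program.

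The point requiring care is definitional rather than computational: I must check that the counter-driven rule realizes \emph{only} positional character, with no hidden dependence on the universe's state --- in particular, that reading and incrementing the counter do not smuggle in sensory input, and that $g(n)$ is defined for every $n$ so that $\varepsilon_{s}$ is total on $\mathbf{T}$. I would also note, in parallel with the previous lemma, that for the sequence to consist of genuine \emph{actos} the action at each step must be driven by the system's own internal energy, which is automatic for a physical robot actuator. With these checks in place the argument is complete.
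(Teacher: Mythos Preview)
Your proposal is correct and follows essentially the same approach as the paper: extract from the definitions the mathematical form that a positional exo-behavior must satisfy, then exhibit a concrete robot program whose act at each step depends only on the step index. The paper's own witness is a robot that emits the successive digits of a fixed number (e.g.\ $\pi$) in a base matching the number of available acts, with each digit mapped to an act by a surjective function $f_{s}^{P}$; your counter-plus-lookup-table construction is an equally valid instance of the same idea.
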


\begin{proof}
Según la definición de exocomportamientos posicionales y fasa posicional, la secuencia de actos de un exocomportamiento se puede expresar matemáticamente de la siguiente manera:

\[
\langle (\rho f^{P}_{s}(1))_{1},...,(\rho f^{P}_{s}(n))_{n}   \rangle
\]

Así, fijándose en la expresión matemática del comportamiento se puede crear un programa informático que genere una secuencia de acciones. Por ejemplo, si se tiene un robot que puede ejecutar diez actos diferentes, entonces se crea un programa que genera la secuencia de dígitos que componen un número, se elige por ejemplo el número pi. Entre los dígitos y los actos se fija una función exhaustiva, $f^{P}_{s}$. Así, cada vez que el programa genere un determinado dígito  el robot ejecuta la acción asociada a ese dígito. Si se quiere generar un comportamiento posicional  para un robot que tenga un número de actos distinto de diez, basta con cambiar la base en la que se trabaja para la representación de los números.

\end{proof}

\begin{lemma}
El conjunto de exocomportamientos sensibles es no vacío
\end{lemma}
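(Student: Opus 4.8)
El plan es reproducir el esquema de los dos lemas anteriores: construir expl\'{\i}citamente un sistema (un robot programado) cuyo exocomportamiento satisfaga la definici\'on de exocomportamiento sensible, de modo que el conjunto quede probado no vac\'{\i}o exhibiendo uno de sus elementos. Primero particularizar\'{\i}a la relaci\'on ecuacional al car\'acter sensible de la fasa; por las definiciones de exocomportamiento sensible y de fasa de car\'acter sensible, y recordando que $\mathbf{r}_{\mathbf{t}}$ denota el estado realidad del universo en el instante $t$, dicha relaci\'on toma la forma
\[
\varepsilon_{s}(t)=\rho f^{S}_{s}(\mathbf{r}_{\mathbf{t}}),
\]
en la que cada acto de la secuencia es funci\'on del estado realidad del universo.

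A partir de esta ecuaci\'on describir\'{\i}a el programa. El robot se dota de sensores que en cada instante producen una lectura del entorno: esto implementa un subsistema $R$ y una funci\'on $r$ del conjunto de estados del universo en el conjunto de estados de $R$ con al menos dos estados en su imagen, lo que es precisamente una representaci\'on en el sentido de la TGE. Sobre el estado de $R$ el motor de decisi\'on aplica una funci\'on (el mecanismo de computaci\'on) que selecciona el acto $\varepsilon_{s}(t)\in\mathbf{A}^{s}$. El ejemplo m\'as simple es un robot con un sensor de luz que avanza cuando recibe luz y gira cuando no la recibe: acto a acto, la secuencia resultante es funci\'on del estado realidad del universo, luego constituye un exocomportamiento sensible y el conjunto es no vac\'{\i}o.

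El obst\'aculo principal no es la construcci\'on del robot, que es rutinaria, sino justificar con rigor que el exocomportamiento obtenido es exclusivamente de car\'acter sensible y no cae en los otros conjuntos elementales ni es una mezcla de caracteres. Para ello habr\'{\i}a que establecer: (i) que la dependencia del acto respecto de $\mathbf{r}_{\mathbf{t}}$ es genuina, esto es, que existen al menos dos estados del universo que inducen actos distintos (en caso contrario el car\'acter sensible ser\'{\i}a vacuo y el comportamiento podr\'{\i}a reclasificarse como posicional); (ii) que la selecci\'on del acto no depende de su posici\'on en la secuencia ni de ninguna variable aleatoria independiente del estado del universo, con lo que los caracteres posicional y aleatorio quedan descartados; y (iii) como se advierte en la discusi\'on del postulado primero a prop\'osito de la \emph{Escherichia coli}, que es el mecanismo de decisi\'on y no el de actuaci\'on el que fija la clasificaci\'on, por lo que debe comprobarse que el argumento de la funci\'on de decisi\'on del robot es la representaci\'on del estado realidad del universo. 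Establecidos estos tres puntos, el lema queda demostrado y, junto con los dos lemas anteriores, se obtiene el Teorema fundamental de los exocomportamientos.
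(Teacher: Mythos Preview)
Your proposal is correct and follows essentially the same strategy as the paper: exhibit a robot whose program maps sensed environment states to actions, so that the resulting sequence of acts is a function of the estado realidad. The paper's proof differs only in details: it writes the relation as $\varepsilon_{s}(t)=\rho f^{S}_{s}(\psi_{r},p_{1},\dots,p_{m})$ with $m\geq 0$, taking as argument a \emph{representaci\'on} $\psi_{r}$ of the estado realidad (rather than $\mathbf{r}_{\mathbf{t}}$ itself) and allowing extra parameters provided they are neither a position index nor a random variable independent of the universe, and it instantiates the example with a robot reading an artificial vision camera rather than a light sensor; your additional discussion of points (i)--(iii) on purity of the sensible character is not made explicit in the paper's proof, though it is consonant with the constraints the paper places on the $p_i$.
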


\begin{proof}
Según la definición de exocomportamientos sensibles y fasa sensible su relación se puede expresar de la siguiente manera:

\[
\varepsilon_{s}(t)=\rho f^{S}_{s}(\psi_{r},p_{1},...,p_{m})
\]

La fórmula  $\psi_{r}$  es una representación del estado realidad en el momento $t$ del tiempo, y los parámetros  $p_{1},..., p_{m}$ aparecen debido a que el teorema dice que $\varepsilon_{s}(t)$  es función de  $\psi_{r}$ pero  no dice en función únicamente de  $\psi_{r}$, por lo que podría haber más parámetros, aunque estos  no pueden ser ni una variable aleatoria independiente del estado del universo, ni la posición del acto en la secuencia . Así, $m\geq 0$.

Por lo tanto, como consecuencia de la expresión matemática es necesario desarrollar un programa que tome representaciones del estado del entorno y le asocie un acto del robot. Así,  por ejemplo, si se desarrolla un programa que capte de la cámara de visión artificial de un robot y le asocie a cada estado captado una acción concreta del robot, ocurrirá que al ejecutarse el programa en el robot, el comportamiento del robot será un comportamiento sensible, ya que sus actos sólo son función del estado en el que se encuentre el universo.

\end{proof}

 Este  teorema junto al postulado primero tiene una importante implicación para el desarrollo de la TGE. Ya que como consecuencia directa de ellos el desarrollo de la TGE pasa en primer lugar por explicar los tres tipos de comportamientos elementales. Para describir de manera precisa cada uno de ellos es necesaria una teoría matemática que permita describirlos. Así, la TGE contiene  una teoría para describir los exocomportamientos aleatorios,  otra teoría para describir los exocomportamientos posicionales  y una  última teoría para los exocomportamientos sensibles. Pero de las tres teorías, tan sólo es necesario desarrollar la que se ocupe del conjunto de comportamientos sensibles. La razón es que para los otros dos conjuntos de comportamientos ya existen ramas matemáticas que los pueden tratar. En el caso de los comportamientos posicionales  es la teoría de números la que se debe de encargar de hallar el valor de la fasa del sistema. Esto es una consecuencia de que la secuencia de actos puede ser interpretada como un número, donde cada dígito del número representa el acto que lleva a cabo el sistema. Para el caso de los comportamientos aleatorios, puesto que la secuencia de actos se trata de un proceso estocástico, es la teoría de la probabilidad la encargada de hallar la función aleatoria que genere el comportamiento y que es el valor de la fasa aleatoria.

\subsection{Otros Resultados sobre la Fasa}

De las definiciones de los términos primitivos realizadas en la teoría que se está presentando se pueden extraer ciertas consecuencias sobre la propiedad fasa de un sistema. Esas consecuencias se enuncian en los siguientes resultados.

\subsubsection{Características la Fasa}

\begin{corolario}[Características de la fasa]
Los sistemas que tiene fasa tiene mecanismos computacionales, bien sean deterministas o  bien probabilistas, con capacidad clásica o de hipercomputación.
\end{corolario}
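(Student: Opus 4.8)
La idea es reducir el enunciado a los tres tipos de carácter de la fasa y, en cada caso, leer la relación ecuacional $\varepsilon_s(t)=\rho f_s(p_0,\ldots,p_m)$ como un mecanismo de computación en el sentido de la TGE. Sea $S$ un sistema con fasa. Por la definición de fasa, $S$ manifiesta al menos un exocomportamiento $\varepsilon_s$ ligado a $f_s$ por esa ecuación. Por el Postulado primero junto con el Teorema fundamental de los exocomportamientos, todo exocomportamiento es uno de los tres tipos elementales o una combinación de ellos; por tanto $f_s$ se descompone en una componente posicional $f_s^P$, una aleatoria $f_s^A$ y una sensible $f_s^S$, y al menos una de las tres está presente.

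A continuación trataría cada componente por separado, apoyándome en las expresiones matemáticas obtenidas en las demostraciones de los tres lemas. En el caso posicional, $\varepsilon_s(t)$ es función de la posición del acto en la secuencia y dicha función está totalmente fijada; tomando como subsistema $C$ el que codifica esa posición se obtiene, por la definición de mecanismo de computación, un mecanismo determinista. En el caso aleatorio, $\varepsilon_s(t)=\rho f_s^A(X_t)$ con $X_t$ una variable aleatoria independiente del estado del universo, de modo que el proceso que genera los actos es una función con una fuente de aleatoriedad, esto es, un mecanismo de computación probabilista. En el caso sensible, $\varepsilon_s(t)=\rho f_s^S(\psi_r,p_1,\ldots,p_m)$, y por la aclaración que acompaña al Postulado de la Representación la locución \emph{``es función de''} se identifica con la de poseer un mecanismo de computación que opera sobre la representación $\psi_r$; según que entre los parámetros $p_i$ figure o no una variable aleatoria, el mecanismo resultante es probabilista o determinista. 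En los tres casos $S$ posee un mecanismo de computación.

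Para la alternativa entre capacidad clásica y capacidad de hipercomputación invocaría la observación hecha tras la definición de mecanismo de computación: la TGE identifica la computabilidad con la noción de función sin restricción alguna, de modo que $f_s$ (o cualquiera de sus componentes) puede ser computable por una máquina de Turing, en cuyo caso el mecanismo tiene capacidad clásica, o no serlo, en cuyo caso el mecanismo tiene capacidad de hipercomputación. Reuniendo los casos se obtiene el corolario.

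El punto que exige más cuidado es justificar con precisión que $f_s$ y sus componentes pueden verse literalmente como funciones de estados de un subsistema en estados de un subsistema, de forma que se aplique al pie de la letra la definición de mecanismo de computación: en el caso sensible hay que exhibir el subsistema-representación $R$ de la definición de Representación como parte del dominio, y en los casos posicional y aleatorio hay que postular un subsistema que almacene la posición o que realice el muestreo. Un segundo detalle, menor pero que conviene hacer explícito, es el de un exocomportamiento que combina varios tipos de carácter a la vez: habría que observar que la composición de los mecanismos parciales sigue siendo un mecanismo de computación, determinista si ninguna componente es aleatoria y probabilista si alguna lo es.
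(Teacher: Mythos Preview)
Your argument is correct, but it is considerably more elaborate than the paper's. The paper does not perform any case analysis over the three elementary types of fasa: it simply observes that, by definition, the value of the fasa of a system is \emph{described by a function} $f_s$, and that the definition of mecanismo de computaci\'on is likewise given in terms of a function on states of a subsystem; hence having fasa entails having a computational mechanism. The determinist/probabilist dichotomy is read off directly from that same definition (the function may or may not involve a random source), and the classical/hypercomputation dichotomy from the remark that the TGE identifies computation with the full notion of function, including non--Turing-computable ones. In short, the paper treats the corollary as an immediate unpacking of two definitions, in two sentences.

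Your route---splitting via the Postulado primero and the Teorema fundamental into the posicional, aleatorio and sensible components and exhibiting a mechanism in each case---reaches the same conclusion and has the virtue of making explicit \emph{where} the probabilistic alternative actually arises (namely in $f_s^A$). The price is the extra machinery and the need, which you yourself flag, to justify that each component literally fits the ``function on states of a subsystem'' template; the paper sidesteps this by staying at the level of $f_s$ itself. One small inaccuracy: in the purely sensible case the parameters $p_1,\ldots,p_m$ are, by the constraints recorded in the proof of the third lemma, \emph{not} allowed to be random variables independent of the state of the universe, so the probabilistic option does not arise there; it only reappears when you combine characters, as you note in your final paragraph.
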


Este resultado surge de la definición de fasa y de sistema de computación. La definición de fasa implica la necesidad de un mecanismo computacional, bien sea determinista o probabilísta. El mecanismo computacional, al ser descrito por el concepto de función engloba tanto a las funciones computables de manera clásica, como a las no computables de manera clásica. Es decir, el mecanismo puede ser un mecanismo clásico o de hipercomputación.

\subsection{Exocomportamiento Sensible y Fasa Sensible}
Como se ha mencionado anteriormente, el primer paso que se va a dar en el desarrollo de la TGE es explicar el conjunto de exocomportamientos sensibles. Por lo tanto, este apartado se dedicará a tratar cuestiones referentes al exocomportamiento sensible y la fasa de carácter sensible.

\subsection{Significado de las Ecuaciones del Exocomportamiento Sensible.}
Los fenómenos de la naturaleza pueden ser medidos directamente; pero los valores de la propiedad que los produce no. Así, se necesita usar un método indirecto que permita obtener el valor de la propiedad. Este método se basa en usar la relación entre fenómeno y propiedad. Esa relación se describe mediante ecuaciones, y por lo tanto, resolver las ecuaciones es el método que permite hallar los valores que puede tomar la propiedad. Como en este caso los valores de la propiedad que se intentan conocer se corresponden con las funciones que  pueden sustituir  $f_{s}^{S}$, las ecuaciones que contiene la teoría son ecuaciones funcionales.  Anteriormente se mencionó que: el primer paso que se debe de dar para solucionar el problema de la unificación de los exocomportamientos  es explicar los exocomportamientos sensibles. Por lo tanto, a continuación se va describir más formalmente en qué consisten las funciones que pueden ser solución de la ecuación de los exocomportamientos sensibles. Las funciones $f_{s}^{S}$  se definen de la siguiente manera:
\[
f^{S}_{s}: L(\mathcal{E})\times P_{1}\times \cdots\times P_{(m-n)}\times \cdots\times P_{m}\rightarrow
( A_{0}^{s}\times\cdots\times A_{p}^{s} )\times (P_{(m-n)+1}\times \cdots\times P_{m})
\]

donde cada $A_{i}^{s}$ es un conjunto de representaciones de los actos con los que puede actuar el sistema-s sobre la propiedad $i$ del universo. El conjunto   $L(\mathcal{E})$ es un conjunto de fórmulas que permiten representar estados del universo. Los conjuntos $P_{(m-n)+1},..., P_{m}$  son siempre parámetros de salida que retroalimentan la función $f^{S}_{s}$. Las funciones  $f_{s}^{S}$  pueden ser visualizadas con un enfoque de caja negra como aparece  en la figura \ref{fig:MC}.

\begin{center}
\begin{figure}
  \centering
    \includegraphics{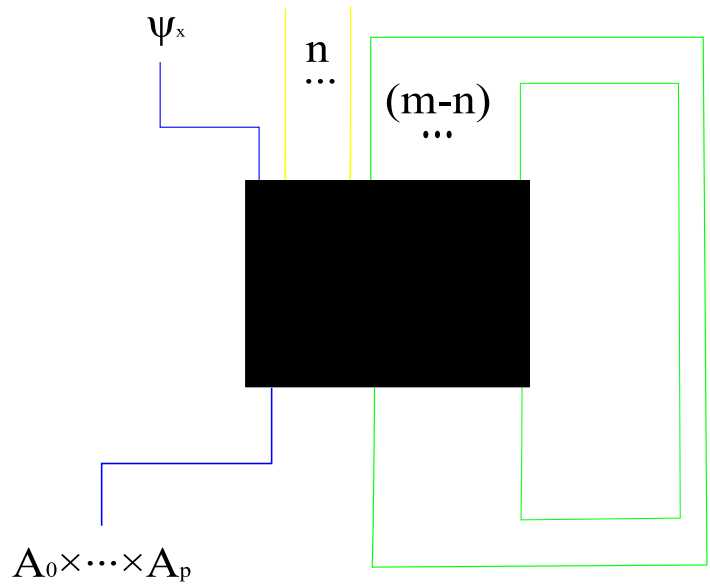}
  \caption{En la figura se observa la función $f^{S}_{s}$ como una caja negra. Los parámetros en azul siempre existen en una función $f^{S}_{s}$, y como muestra la figura estos son el parámetro de entrada con la representación del estado realidad del universo y la representación de acciones que se genera. En anaranjado son valores que son independientes de la historia del mecanismo. Finalmente, en verde los parámetros de entrada que son retroalimentados por parámetros de salida, permitiendo que el sistema sea función de la historia del propio sistema.}
  \label{fig:MC}
\end{figure}
\end{center}

\subsection{Algoritmos y Fasa }
Para terminar esta sección se va a contestar al lector sobre unas cuestiones que le pueden haber surgido: ¿Cuál es la diferencia entre el concepto de algoritmo y fasa ? ¿Todo algoritmo es la descripción de un valor de fasa? Si el lector se ha formulado estas preguntas se ha dado cuenta que la fasa es computación; pero ahora debe darse cuenta que la computación no es el único ingrediente de la fasa.  La diferencia entre el concepto de algoritmo y fasa consiste en que la fasa es la computación de unos datos concretos y relativos a la estructura del sistema, mientras que un algoritmo no pide ningún requisito a los datos que computa. En la TGE, sólo si un sistema efectúa computación sobre una representación del estado del universo se habla de que el sistema posee fasa sensible. La fasa sensible es una combinación indivisible de computación y representación. En caso de que los datos que usa para computar un sistema no fueran una representación del estado del universo sucedería que el exocomportamiento que realizaría el sistema sería posicional  o aleatorio, dependiendo de cuál sea el modo en que se generen los datos que entran a computarse en el sistema; pero no sería exocomportamiento sensible. Se pondrá a continuación un ejemplo que aclare esta última cuestión:

\textit{Imagínese que se tiene el famoso sistema experto Prospector, que se dedica a decidir cuándo se encontrará el suficiente mineral de un tipo como para que sea rentable la operación de extraer mineral. Se monta una máquina totalmente autónoma capaz de hacer extracciones de material por si misma donde el subsistema que la dirige es Prospector. En este escenario se tienen tres situaciones posibles para suministrar datos a Prospector: a) se dota a la máquina de mecanismos para hacer análisis geológicos del ambiente autónomamente; b) la máquina genera datos geológicos por medio de variables aleatorias independientes del ambiente; c) la máquina genera una secuencia de datos geológicos continuamente.}
\newline
\newline

Evidentemente sólo en el caso a) el sistema posee fasa sensible, ya que en los otros dos casos el exocomportamiento del sistema no tendrá ningún sentido respecto al objetivo de extraer mineral de una manera rentable. El lector debe de notar que el sistema contiene en todos los casos el mismo algoritmo de computación, lo que diferencia el caso a) del resto de casos es que los datos son una representación del entorno. De modo que el algoritmo no es la descripción total del valor de la fasa, es necesario el ingrediente de la representación. Computación y representación son dos ingredientes igual de importantes e indisolubles para que un sistema tenga fasa sensible, ya que ninguno de ellos por si sólo es suficiente para dotar al sistema de fasa sensible.

\section{Teoría Cognitiva de Condiciones de Verdad}

En la sección anterior se explicó que en la TGE se usa el enfoque científico; observando la naturaleza con los conceptos de propiedades y fenómenos. Pero la precisión que consigue, por ejemplo, la física describiendo la naturaleza surge de las descripciones matemáticas. Así, un elemento fundamental de una teoría científica es la teoría matemática que describe formalmente el fenómeno, la propiedad, y cómo están relacionados entre ellos. Anteriormente  también se ha mencionado que el desarrollo de la TGE pasa en primer lugar por desarrollar  una descripción formal para los exocomportamientos sensibles. A la parte de la TGE que se encarga de describir formalmente los exocomportamientos sensibles se la denomina teoría cognitiva de condiciones de verdad(TCCV).

\subsection{Marco Matemático para los Exocomportamientos Sensibles }
El primer paso para desarrollar una teoría formal sobre los exocomportamientos sensibles se corresponde con la elección de un marco matemático que permita describir formalmente los exocomportamientos sensibles, sus causas y su relación. Por ejemplo, la física utiliza: la geometría para describir el espacio, el cálculo vectorial para la mecánica clásica, y las ecuaciones diferenciales para la termodinámica. El marco matemático que se usa en la TCCV para desarrollar los postulados presentados en la sección anterior es el de la teoría de modelos\cite{Manzano1991}. A menos que el lector provenga del área de lógica-matemática posiblemente no conozca este marco matemático. La teoría de modelos fue desarrollada en la década de los 50's del siglo XX y su principal impulsor fue el importante matemático-lógico Alfred Tarski. La palabra modelo puede llevar a interpretaciones incorrectas de cuál es el marco matemático, ya que en informática la palabra se encuentra ligada al área de la simulación con el significado de una representación. También en el caso de la física es este su significado, como cuando se habla del modelo del átomo o  del modelo del universo. Pero el significado de la palabra modelo en la teoría de modelos difiere sustancialmente del de las anteriores áreas científicas. El sentido de la palabra modelo en la teoría de modelos es el mismo que en pintura o fotografía, el modelo es el objeto que va a ser dibujado o fotografiado, y en el caso de la teoría de modelos el objeto que va a ser representado. La teoría de modelos estudia las relaciones entre estructuras matemáticas y lenguajes formales. Lo que une los elementos de esos dos tipos de objetos es la semántica, mediante la noción de verdad. Se suele dar como ejemplo lo siguiente\cite{Feferman1993}:

\begin{center}
\textit{``La nieve es blanca''\- es verdad si, y sólo si, la nieve es blanca.}
\end{center}

La semántica conecta la frase ``La nieve es blanca'' con el hecho físico de que la nieve es blanca. El hecho físico de que la nieve es blanca se lo denomina condición de verdad.  La teoría de modelos diferencia tres tipos de objetos: modelos, lenguajes formales e interpretaciones. Los modelos son los objetos que se pueden representar, los lenguajes formales las representaciones, y las interpretaciones son los elementos que indican que parte de un modelo está representando un término, o una fórmula, de un lenguaje formal. Pero, ¿por qué este marco y no otro? Bien, para construir una teoría general formal sobre los fenómenos de los exocomportamiento sensibles,  la definición del tipo de exocomportamiento sensible exige que en el marco matemático se  diferencie perfectamente entre lo representado y la representación. La teoría de modelos cumple con esta condición, diferencia perfectamente entre lo representado y la representación. A continuación, se mostrará un ejemplo sencillo que ilustra el marco de la teoría de modelos. En la teoría de modelos, un modelo es una estructura matemática. El siguiente ejemplo usa una estructura que será familiar,  la  relación de orden.

Sea la estructura matemática  $\mathcal{O} = (\mathbf{O}, < )$ donde $\mathbf{O} = \{ 1, 2, 3\}$ y la relación de orden es $<\; =\{\; 1<2,\quad 1<3,\quad 2<3  \;\}$ . Por otro lado se define un lenguaje de primer orden con igualdad (variables, constantes, cuantificadores, símbolos auxiliares), $L(\mathcal{O})$, adecuado a la estructura. El lenguaje se compone de fórmulas y cada una expresa algo sobre la estructura matemática. Un tipo de fórmulas muy especiales son los axiomas. La razón de su importancia es que mediante los axiomas se puede caracterizar a un conjunto de muchas estructuras, llamado clase, y si de los axiomas se deriva un teorema, este se cumplirá en todas las estructuras de la clase. En el caso de la clase de los estructuras de orden sus axiomas son:

\[\begin{array}{lllc}
  -   \neg\exists x\in\mathbf{O}&   \qquad& x<x\\
  -  \neg\exists x,y\in\mathbf{O} & \qquad& x<y\quad \wedge\quad y < x \\
  - \forall \; x,\; y,\; z \in \mathbf{O}& \qquad& x<y\quad \wedge\quad y < z \Longrightarrow x < z
\end{array}
\]
Pero las fórmulas también pueden decir cosas más concreta como $1<3$ y $3<1$. Por supuesto, lo que interesa es saber si lo que se dice sobre la estructura es verdad, para ello la teoría de modelos utiliza un elemento llamado interpretación, $\mathcal{I}$, que consiste en una función que hace corresponder a elementos del lenguaje formal con elementos de la estructura matemática. De este modo se dice que
\[
\mathcal{O}\mathcal{I} \Vdash 1<3
\]
\[
\mathcal{O}\mathcal{I} \nVdash 3<1
\]
donde el símbolo $\Vdash$ significa ``es modelo'' y $\nVdash$ ``no es
modelo''.

\subsection{Formalización de las Ecuaciones}
En la demostración del teorema fundamental de los exocomportamientos se presentó una expresión que denotaba la relación entre fasa sensible y exocomportamiento sensible. El paso que se va a dar a continuación es traducir la expresión al marco matemático que se ha escogido. Durante el proceso de formalización se debe diferenciar entre la asignación a un concepto informal de un elemento formal del marco matemático y la igualdad de las ecuaciones de la igualdad entre dos elementos matemáticos. Para ello se usará el siguiente convenio de notación, la asignación se representará con un $\equiv$, y para la igualdad ecuacional se usará el símbolo clásico $=$.

\subsubsection{Descripción Formal de los Fenómenos}
En la teoría de modelos, la estructura matemática es el elemento que fija lo que es verdad, ya que en él se encuentran las condiciones de verdad, y por lo tanto, es el elemento que permite explicitar lo que ocurre en un universo. Así, el elemento que adopta el papel de describir el exocomportamiento sensible es la estructura matemática. Las estructuras matemáticas que estudia la TES tienen una función que simboliza la secuencia de actos que realizan los sistemas. A esa función se la denomina función interacción del sistema-s, y se representa por $\quad\funi_{s}$ .  Su definición formal es:

\[
\begin{array}{cccc}
  \funi_{s} & \mathbf{T} &\longrightarrow &\mathbf{A}_{0}^{s}\times\cdots\times\mathbf{A}_{p}^{s} \\
   & t & \mapsto & \langle\mathbf{a}_{o},...,\mathbf{a}_{p}\rangle
\end{array}
\]

Recuérdese que la forma de las ecuaciones que se han derivado de los postulados es:

\[
\varepsilon_{s}(t)=\rho f^{S}_{s}(\psi_{r},p_{1},...,p_{m})
\]

Así pues,
\[
\varepsilon_{s}(t)\equiv  \funi_{s}
\]

Probablemente a estas alturas el lector se esté preguntando como son más exactamente esas estructuras  matemáti\-cas (o modelos) de las ecuaciones. Lo más común es que el lector conozca estructuras ``estáticas'' como grupos, relaciones de orden o espacios vectoriales; pero evidentemente, estas ecuaciones no hablan de ese tipo de estructuras. Las estructuras de las que hablan las ecuaciones de la TES son una modificación del tipo que existen en la lógica modal, las llamadas estructuras de Kripke. Una estructura de Kripke es un conjunto de elementos, llamados mundos, y una relación, llamada relación de accesibilidad, que dicta con que mundos están relacionados cada mundo. Así, la idea que hay detrás de una estructura de la TES es usar una estructura de Kripke para que cada mundo represente un estado del universo y que la relación de accesibilidad, conecte los estados cada uno de los estados con todos los estados que pueden ser generados por los actos que se pueden realizar. Pero una estructura de la TGE también es diferente de una estructura de Kripke. En una estructura de la TGE se debe fijar un estado como inicial y la relación de accesibilidad es una terna de estado, acto y estado. Además, la  estructura posee la función interacción $\quad\funi_{s}$ , que dicta los actos que realiza cada sistema en cada momento del tiempo. A las estructuras de este tipo se la denota por $\mathcal{E}$. La figura \ref{fig:estructuraE} visualiza una estructura $\mathcal{E}$.

\begin{center}
\begin{figure}
  \centering
    \includegraphics{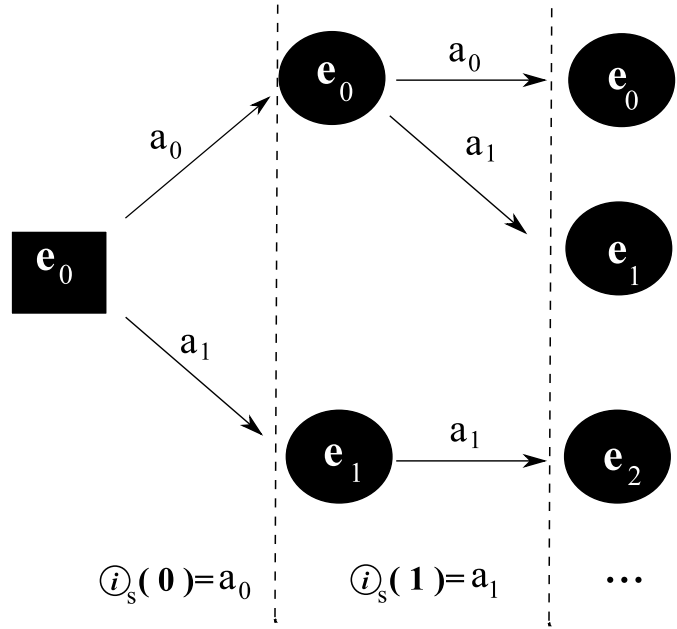}
  \caption{Visualización de una estructura de la TGE extendida en el tiempo. Los  estados del universo vienen representados por los nodos del grafo. Las flechas  representan actos. Si el universo está en un estado, y se produce un determinado acto, el universo queda en otro estado concreto. Así, en el diagrama cada una de las flechas representa un acto y  cada una conecta un estado  en que se produce un acto y el que se genera a causa de ese acto. El estado inicial en el que se encuentra el universo viene denotado por la forma cuadrada del nodo. La función interacción dicta que interacción se lleva a cabo en cada instante del tiempo. }
  \label{fig:estructuraE}
\end{figure}
\end{center}

 Alguno de los lectores podría estar pensando que las estructuras matemáti\-cas de la clase $\mathcal{E}$  usan tiempo discreto, lo cual parece que está alejado del ser humano que piensa y actúa en un tiempo continuo. Nada más alejado de la verdad. El sistema de control de los seres humanos, el sistema nervioso, opera intermitentemente. Numerosas investigaciones han llevado a comprender que el movimiento no es tan armónico como lo hace parecer los movimientos voluntarios, en realidad, la ejecución motora consiste en una serie intermitente de sacudidas cuya periodicidad está entre 8 y 12 Hz\cite{Llinas1991}.

\subsubsection{ Descripción Formal de la Fasa Sensible}

En el término derecho de la ecuación, se encuentra la función  $f_{s}^{S}$ que tiene un parámetro que es una representación del estado realidad. En teoría de modelos, son las fórmulas del lenguaje formal las que se ocupan de representar. De modo que para formalizar la representación es necesario definir un lenguaje formal  $L(\mathcal{E} )$ que permita expresar relaciones y características de los elementos de $\mathcal{E}$, y la representación queda formalizada en  el marco matemático como una fórmula  $\psi_{r}$ que habla sobre el estado realidad en el instante $t$ .

\subsubsection{Descripción Formal de la Relación entre el Exocomportamiento Sensible y la Fasa Sensible}

Una vez se ha explicado cómo se formalizan el exocomportamiento sensible y la fasa sensible de un sistema en el marco matemático elegido. El siguiente paso es especificar y formalizar $\rho$ en el marco de la teoría de modelos. El elemento $\rho$  representa la relación que existe entre el exocomportamiento sensible y la fasa sensible de un sistema. La relación debe de cumplir dos condiciones:
\begin{enumerate}
  \item Hay que transformar la salida de la función $f_{s}^{S}$ en un elemento del modelo. La razón es que la función $f_{s}^{S}$  devuelve únicamente un elemento del lenguaje, la representación de un acto; pero en el lado derecho de la ecuación los objetos pertenecen al modelo. Por lo tanto no se podría poner un igual entre ellos porque son objetos de distinta naturaleza matemática.
  \item Se debe extraer de la secuencia de parámetros devuelta por la función $f_{s}^{S}$  siempre el primero. Eso se debe a que la función $f_{s}^{S}$  puede devolver varios parámetros, por lo que se debe de extraer aquel parámetro en el que se encuentra la representación de la acción, y como por definición siempre se encontrará en la posición primera, es ese el parámetro que se debe de extraer.
\end{enumerate}

La primera condición se puede expresar formalmente simplemente realizando una interpretación de la representación del acto sobre la realidad. Recuérdese, que en cada instante de tiempo se pone la etiqueta $\mathbf{r}_{\mathbf{t}} $  al estado en el que se encuentra el universo. Así, las representaciones de los actos son interpretadas por el estado realidad.  De manera que la representación  del  acto, $a_{i}$, queda transformada en el acto, $\mathbf{a}_{i}$. Es decir:
\[
\mathbf{r}_{\mathbf{t}} (a_{i}) = \mathbf{a}_{i}
\]

De este modo, la traducción formal queda así:

\[
\rho\equiv \mathbf{r}_{\mathbf{\mathbf{t}}}()
\]
Para que la relación cumpla también la segunda condición hay que formalizar la operación de extraer de la secuencia de parámetros el primero, ya que por la definición siempre se encontrará en esa posición la secuencia de representaciones de actos. Para ese cometido se usará una función proyección  $\pi_{c}$ que seleccione la primera componente de la tupla. Por lo tanto:

\[
\rho\equiv \mathbf{r}_{\mathbf{t}}\circ\pi_{1}
\]

Recuérdese que la relación es una relación matemática entre la propiedad y el fenómeno; la naturaleza no realiza cálculos matemáticos. Cuando Newton dice: ``la fuerza de la gravedad es inversamente proporcional al cuadrado de la distancia'' no quiere decir que la naturaleza calcule el cuadrado de la distancia, sólo refleja cómo se comporta la naturaleza. Puesto que la función  siempre debe devolver al menos un parámetro que consiste en una representación de un acto, el uso de la función proyección no provoca ninguna incompatibilidad aunque la función solo devolviera un parámetro. Por lo que se establece que:

\[
\rho\equiv \mathbf{r}_{\mathbf{t}}\circ\pi_{1}
\]

\subsubsection{ Descripción Formal de  las Ecuación General del Exocomportamiento Sensible}

El siguiente paso es sintetizar todos los anteriores elementos para formalizar la ecuación entre exocomportamiento sensible y fasa sensible. Así, sustituyendo por cada elemento formal del marco de la teoría de modelos se obtiene la ecuación:

\begin{equation}
\funi_{s}(\mathbf{t})=\mathbf{r}_{\mathbf{t}}(\pi_{1} ( f^{S}_{s}(\psi_{r},p_{1},...,p_{m})))
\end{equation}

A esta  expresión matemática se la denominará ecuación general del excomportamiento sensible.
Uno debe darse cuenta que esta ecuación se corresponde en verdad con toda una familia de ecuaciones, ya que dependiendo del número de parámetros que tenga  $f_{s}^{S}$ surgen diferentes ecuaciones. En la sección 4 se entrará en profundidad sobre $f_{s}^{S}$ y $p_{1},..., p_{m}.$

Una cuestión relevante que debe de quedar bien clara referente al trabajo matemático que se realiza en la TCCV, ya que el lector podría confundirse por sus experiencias anteriores en el campo de la lógica, o las teorías matemáticas. En la TCCV se buscan estructuras matemáticas que especifiquen comportamientos sensibles para posteriormente resolver las ecuaciones que surgen con las distintas arquitecturas funcionales. Para esas estructuras matemáticas,  posiblemente se pueda dar un conjunto de axio\-mas. Esos axio\-mas dan lugar a una teoría; pero el lector no debe confundir la teoría que se derive de los axiomas con la TCCV. La teoría que se genera a partir de los axiomas únicamente establecerá teoremas sobre la clase de estructuras matemáticas que define los axiomas. Por lo tanto, no se debe de cometer el error de identificar la teoría de la clase de estructuras matemáticas que se usan en una ecuación de la TCCV con la TCCV.

Uno debe darse cuenta que esta ecuación se corresponde en verdad con toda una familia de ecuaciones, ya que dependiendo del número de pa\-rá\-metros que tenga  $f_{s}^{S}$ surgen diferentes ecuaciones. En la sección 4 se entrará en profundidad sobre $f_{s}^{S}$ y $p_{1},..., p_{m}$.

\subsection{La TCCV y Técnicas de Lógica en I.A.}
Desde el nacimiento de la I.A. en 1956 hasta los años ochenta, el camino de la lógica, liderado especialmente por John McCarthy y Marvin Minsky, fue el camino principal de investigación para cumplir el objetivo de Dartmouth. Algunos de los más importantes hitos durante ese camino fueron el descubrimiento de J. Alan Robinson en 1965 del principio de resolución y unificación, la sugerencia en 1974 Robert Kowalski de representar las expresiones lógicas como cláusulas de Horn, la aparición de la programación lógica, y la creación de programas como STRIPS. Estos hitos hicieron que la lógica  dominara el campo de la inteligencia artificial durante casi 20 años. Pero a principio de los años 80 los progresos habían cesado y existía un estancamiento, a ello se unió el fracaso del proyecto de 5ª generación del gobierno de Japón y la aparición de determinados problemas frente a los cuales la lógica clásica se mostraba ineficiente. Ese conjunto de hechos, junto al desarrollo de nuevos métodos que si se mostraban eficientes para los problemas que la lógica clásica (como probabilistas, redes neuronales, redes bayesianas,..) condujeron a que la lógica perdiera su posición de dominio a pesar de los éxitos iniciales.

Alguien podría objetar a la TCCV que a pesar de que se llega a plantear el marco de la teoría de modelos a través de una teoría científica sobre la naturaleza, parece que vuelve a proponer a la lógica como línea fundamental de investigación; lo cual sería estar en un punto en el que ya se estuvo en el pasado y que fue necesario abandonar. Nada más lejos de la verdad, se debe de notar que la TCCV no pone ningún tipo de restricción a cual es el mecanismo que asocia a los elementos del dominio de $f_{s}^{S}$ con sus imágenes. Por lo tanto, en ningún momento se dice que tenga que ser un cálculo lógico, como un cálculo de  situación \cite{McCarthy69}\cite{McCarthy20}. Por lo tanto, $f_{s}^{S}$ puede ser definida por un método probabilista, fuzzy, una red neuronal,..., etc.

La idea de usar un método de representación y otro de computación es algo que está siendo investigado en una línea, que se está desarrollando en los últimos años, llamada integración neuro-simbólica\cite{Hitzler2004},  que busca integrar programas lógicos y redes neuronales.

\subsection{ Descomposiciones y Arquitecturas Funcionales}
\begin{definicion}[ Descomposición funcional de $f_{s}^{S}$] Se denomina descomposición funcional de $f_{s}^{S}$ a una expresión  formada con funciones mediante las operaciones de combinación y composición, y que es equivalente a la función $f_{s}^{S}$  en la ecuación general del comportamiento sensible.
\end{definicion}

\begin{definicion}[Arquitectura funcional de $f_{s}^{S}$] Se denomina arquitectura funcional de $f_{s}^{S}$ a una expresión con funciones formada mediante las operaciones de combinación y composición, y que puede sustituir a una función $f_{s}^{S}$  en la ecuación general del comportamiento sensible.
\end{definicion}
\begin{definicion}[Unidad funcional de $f_{s}^{S}$] Se denomina unidad funcional a cada una de las funciones que forma parte de una arquitectura funcional de $f_{s}^{S}$.
\end{definicion}
Debe de notarse que la diferencia entre descomposición funcional y arquitectura funcional consiste en que la descomposición funcional admite todas las funciones que admite $f_{s}^{S}$ y la arquitectura funcional sólo un subconjunto de ellas. La relación entre descomposiciones y arquitecturas funcionales es que fijando parámetros en una descomposición obtendremos distintas arquitecturas. El objetivo de los siguientes apartados es mostrar un método de descomposición  para la función $f_{s}^{S}$ y mostrar algunas arquitecturas funcionales posibles.

\subsubsection{Descomposición de $f_{s}^{S}$}
A continuación se va a presentar una descomposición de la función  $f_{s}^{S}$, cuyos parámetros permitirán definir arquitecturas funcionales. La  descomposición consiste en tres funciones de la siguiente manera:

\[
f_{s}^{S}= (\pi_{c}\circ g^{d}_{s})\times h_{s}
\]

\[
g^{d}_{s}: L(\mathcal{E})\times P_{1}\times \cdots\times P_{(m-n)}\times \cdots\times P_{m}\rightarrow
\bigcup^{j=d}_{j=1}({\it A_{0}^{u}\times\cdots\times A_{p}^{u}
})^{j}
\]
\[
h_{s}: L(\mathcal{E})\times P_{1}\times \cdots\times P_{(m-n)}\times \cdots\times P_{m}\rightarrow
P_{(m-n)+1}\times \cdots\times P_{m}
\]

En las funciones $g^{d}_{s}$ se llama \emph{profundidad máxima} al valor que tenga $d$.
Esta descomposición de la función permite $f_{s}^{S}$ permite, fijando parámetros en las funciones que forman la configuración, definir subconjuntos de valores que puede tomar la fasa sensible de un sistema. Nótese que dada una función  $g^{d}_{s}$ dependiendo de la función proyección $\pi_{c}$ se da lugar a distintas  $f_{s}^{S}$ . A su vez, se puede ver que el conjunto de funciones que pueden sustituir a $f_{s}^{S}$ se divide en subconjuntos en función del valor de $c$. De esta manera, surgen las dos siguientes definiciones:

 \begin{definicion}[ Función $f_{s}^{S}$ asociada a una función $g^{d}_{s}$ vía $c$] Se dirá que una función $f_{s}^{S}$  está asociada a una función $g^{d}_{s}$ vía $c$ si se cumple que:

\[
f_{s}^{S}= (\pi_{c} \circ g^{d}_{s})\times h_{s}
\]
\end{definicion}

 \begin{definicion}[ Conjunto de funciones $f_{s}^{S}$  vía $c$] Se define el conjunto de funciones $f_{s}^{S}$  vía $c$, denotado por $\mathbb{F}^{c}_{s}$ ,  al conjunto de funciones  siguiente:

\[
\mathbb{F}^{c}_{s} = \{f_{s}^{S} : f_{s}^{S}= (\pi_{c} \circ g^{d}_{s})\times h_{s} \}
\]

\end{definicion}

En una arquitectura funcional podrían existir una redundancia de unidades funcionales debido a que hubiera unidades funcionales biyectivas cuya salida fuera la entrada de otra unidad funcional que fueran la inversa de la unidad funcional que ha generado la entrada, de manera que no causarían ninguna alteración pero si un gasto al sistema.

\begin{definicion}[Arquitectura funcional redundante]
Se dice que una arquitectura funcional es redundante cuando dentro de la arquitectura funcional existen tres unidades funcionales $f, f ^{-1}$ y $g$ , entre las que se da la siguiente configuración:
\[
g( f ^{-1}(f(a)  ) )
\]
donde la unidad funcional $f$,  es biyectiva,  la unidad funcional $f ^{-1}$ es la inversa de $f$ y $g$  puede ser cualquier unidad funcional incluso la misma $f$.
\end{definicion}

Se considerará que todas las arquitecturas funcionales que se estudian son no redundantes.

\subsubsection{ Conjunto de Exocomportamientos Sensibles Ordenados}

De todos los posibles conjuntos $\mathbb{F}^{c}_{s}$   interesa especialmente  $\mathbb{F}^{1}_{s}$ , al que se denominará conjunto de exocomportamientos sensibles ordenados. El conjunto  $\mathbb{F}^{1}_{s}$ contiene subconjuntos muy interesantes de exocomportamientos. La razón de fijar $c = 1$  proviene de la función $g^{d}_{s}$.  La función $g^{d}_{s}$ genera una secuencia de representaciones de actos pero desconoce lo que hará la función proyección.  La función proyección debe de extraer únicamente uno de las representaciones de actos de la secuencia que genera $g^{d}_{s}$. Por lo tanto, un conjunto de arquitecturas muy interesantes son aquellas en las que la función proyección mantiene el orden de la secuencia generada. Aunque puede parecer algo no muy significativo, con el siguiente ejemplo se apreciará la importancia del orden de la secuencia.
\newline
\textit{ Imagínese que existe una persona que no realizara los actos en el mismo orden que los planea. Así, cuando esta persona se propone beber agua de una botella ocurre lo siguiente. Inicialmente planea la siguiente secuencia de actos: (1) coger la botella (2) quitar el tapón de la botella (3) inclinar la botella sobre su boca abierta (4) poner el tapón en la botella.  A continuación, la persona realiza los mismos actos pero en el siguiente orden (1) coger la botella (2) quitar el tapón de la botella (3) poner el tapón en la botella (4) inclinar la botella sobre su boca abierta.}
\newline

En este ejemplo, la persona no ha llegado a beber agua, que era el estado al que conducía la secuencia generada, de manera que, aunque tiene capacidad para generar la secuencia adecuada, su exocomportamiento no es efectivo para lograr un estado concreto. La razón de la ineficiencia es que los actos no son conmutativos, de manera que si los actos no mantienen el orden, entonces, al igual que ocurre cuando la estabilidad básica es baja o la inestabilidad es alta, el exocomportamiento es ineficiente para cumplir las restricciones de exoactividad. A cada exocomportamiento sensible que mantiene el orden se le denominará \emph{exocomportamiento sensible ordenado}.

\begin{definicion}[Exocomportamiento sensible ordenado] Un sistema lleva a cabo un exocomportamiento sensible ordenado si el acto que realiza en cada momento del tiempo coincide con la representación del acto que está en la primera posición de la secuencia que ha generado la función $g^{d}_{s}$.
\end{definicion}

\begin{definicion}[ $f_{s}^{S}$ -ordenada] Una función  $f_{s}^{S}$   que genera un  exocomportamiento ordenado se la denominará función  $f_{s}^{S}$-ordenada
\end{definicion}

\begin{definicion}[Arquitectura funcional sensible ordenada] A una arquitectura que sea equivalente a la arquitectura  $ (\pi_{1}\circ g^{d}_{s})\times h_{s}$ se la denominará arquitectura funcional ordenada.
\end{definicion}

\subsubsection{Arquitecturas Funcional Sensible I}
La primera arquitectura que se va a presentar es la más simple de todas, se denomina arquitectura funcional sensible I(AFS-I). La AFS-I se obtiene fijando en la descomposición, que fue presentada, los valores de los parámetros $c$ y $d$  a $1$ y se elimina la función $h_{s}$. De esa manera la expresión de la arquitectura funcional sensible queda como:

\[
f_{s}^{S}= \pi_{1}\circ g^{1}_{s}
\]

La función  $g^{1}_{s}$ tomará como valor una función a la que se denomina función predicción de tipo-$\alpha$ .

\begin{definicion}[ Función predicción de tipo $\alpha$]
La función predicción de tipo $\alpha$, que se denotará por  $\vdash_{\alpha}$, se define de la siguiente manera:
\[
\begin{array}{cccc}
  \vdash_{\alpha}^{s}: & L(\mathcal{E}) &\longrightarrow & A_{0}^{s}\times\cdots\times A_{p}^{s}\cup \{ \emptyset \} \\
   & \psi_{x} & \mapsto & \langle a_{o},...,a_{p}\rangle
\end{array}
\]
donde $\psi_{x}$ es una representación de un estado del universo y $\langle a_{o},...,a_{p}\rangle$ un acto del sistema-s.
\end{definicion}

Así, la función de predicción de tipo-$\alpha$  devuelve la representación de los actos con los que el sistema reacciona a la representación del estado  $\psi_{x}$. La función de predicción puede no devolver ningún acto, en ese caso la función interacción se iguala con el acto neutro, representado por $\langle e_{0},...e_{p} \rangle$ .

\[
f_{s}^{S}(\psi_{r_{t}}) =\left \{
  \begin{array}{ll}
   \pi_{1}\big(\;\vdash_{\alpha}(\psi_{r_{t}})\big) , &\vdash_{\alpha}(\psi_{r_{t}})\neq \emptyset
   \\ \\
      \langle e_{0},...e_{p} \rangle , & \vdash_{\alpha}(\psi_{r_{t}})=\emptyset
  \end{array}
\right.
\]

De ahora en adelante, para facilitar las notaciones se omitirá el caso se la secuencia vacía.
\newline
La ecuación general del exocomportamiento sensible que aparece con esta arquitectura funcional es la ecuación \ref{ecua:AFS-I}.

\begin{equation}
\funi_{s}(\mathbf{t})=\mathbf{r_{\mathbf{t}}}(\pi^{2}_{1}(\vdash_{\alpha}(\psi)))
\label{ecua:AFS-I}
\end{equation}
Para facilitar la notación de la escritura de las ecuaciones, se ha usado un convenio para los casos en los que hay varias funciones proyección consecutivamente que seleccionan el mismo componente de las respectivas tuplas. El símbolo $\pi$  con su subíndice $c$ servirán para denotar la componente que selecciona de la tupla la función proyección. Además,  se usará un segundo superíndice para designar el número de funciones proyección que hay consecutivamente colocadas. Es decir:

\[
\pi^{2}_{c} = \pi_{c}\circ \pi_{c}
\]

Esta ecuación describe sistemas que reaccionan al estado del entorno, como podrían ser ciertas plantas.

\subsubsection{Arquitecturas Funcional Sensible IIA }
La arquitectura funcional que se va presentar a continuación, denominada arquitectura funcional sensible IIA(AFS-IIA),  consiste en una variación de AFS-I. La variación consiste en añadir un parámetro  de entrada independiente de las salidas de la función. La función  $g^{d}_{s}$ tomará como valor la función a la que se denomina función predicción de tipo $\alpha,\beta$ ,

\begin{definicion}[ Función predicción de tipo-$\alpha,\beta$]
La función predicción de tipo-$\alpha,\beta$, que se denotará por  $\vdash^{s}_{\alpha,\beta}$, se define de la siguiente manera:

\[
\begin{array}{cccc}
  \vdash^{s}_{\alpha,\beta}: & L(\mathcal{E})\times L(\mathcal{E}) &\longrightarrow& \bigcup^{j=d}_{j=1}( A_{0}^{s}\times\cdots\times A_{p}^{s})^{j} \cup \{ \emptyset \} \\
   & (\psi_{x},\varphi) & \mapsto & \langle \overline{a}_{1},...,\overline{a}\rangle_{d'}
\end{array}
\]
donde $d'\leq d$ y $\overline{a}_{i}\in A_{0}^{s}\times\cdots\times A_{p}^{s}$
\end{definicion}

Puesto que todos los elementos que intervienen en una ecuación se refieren siempre al  mismo sistema, a partir de ahora se eliminará de la notación el índice que indica  el sistema en la parte derecha de las ecuaciones. La ecuación que resulta con esta arquitectura es:

Puesto que todos los elementos que intervienen en una ecuación se refieren siempre al  mismo sistema, a partir de ahora se eliminará de la notación el índice que indica  el sistema en la parte derecha de las ecuaciones. La ecuación que resulta con esta arquitectura es:
\begin{equation}
\funi_{s}(\mathbf{t})=\mathbf{r_{\mathbf{t}}}(\pi^{2}_{1}(\vdash_{\alpha,\beta}(\psi,\varphi)))
\label{ecua:AF-IIA}
\end{equation}

Cada función predicción de tipo $\alpha,\beta$ genera una secuencia de representaciones de actos que es función de sus parámetros de entrada  $\psi$ y $\varphi$. De todas las posibles funciones predicción de tipo $\alpha,\beta$ que se pueden dar, existe un subconjunto que interesa especialmente, las funciones predicción orientadas.

\begin{definicion}[Función predicción orientada] Una función predicción orientada es una función predicción en la que se cumple que la secuencia de representaciones de acciones que genera a partir de $\psi$ y $\varphi$  hace que  partiendo de $\psi$  se  genere $\varphi$.
\end{definicion}

Ahora bien, de todas las posibles arquitecturas que tienen la forma anterior interesan aquellas que pertenezcan al conjunto de comportamientos sensible ordenados. Esto es, aquellas que toman para $c$ el valor $1$ . La función $g_{s}^{d}$ genera una secuencia de representaciones de actos, y el fenómeno es una secuencia de actos, por lo tanto para que la secuencia del comportamiento sea eficiente debe cumplirse que mantenga el orden de la secuencia generada. Así,  fijando el parámetro $c =1$ se define la AFS-IIA, cuya expresión es:

\[
f_{s}^{S}= \pi_{1}\circ((\pi_{1}\circ\vdash_{\alpha,\beta})\times h_{s})
\]
Si se sustituye  la función $f_{s}^{S}$  por la  AFS-IIA en la ecuación general del comportamiento sensible surge la ecuación \ref{ecua:AF-IIA}:

\begin{equation}
\funi_{s}(\mathbf{t})=\mathbf{r_{t}}(\pi^{2}_{1}(\vdash_{\alpha,\beta}(\psi,\varphi)\times h_{s}(\psi,\varphi)))
\label{ecua:AF-IIA}
\end{equation}

Esta ecuación describe sistemas que intentan lograr objetivos, como podría ser el no tener hambre.
Hay que darse cuenta que no todos los exocomportamientos del conjunto de exocomportamientos sensibles ordenados y orientados permiten que el sistema persista en el universo.

\subsubsection{Arquitecturas Funcional Sensible IIB }
Esta arquitectura es una variación de la arquitectura AFS-I añadiendo un parámetro de retroalimentación sirve como mecanismo de memoria. La función  $g^{d}_{s}$ tomará como valor la función a la que se denomina función predicción de tipo  $\alpha,\gamma$.

\begin{definicion}[ Función predicción de tipo-$\alpha,\gamma$]
La función predicción de tipo-$\alpha,\gamma$, que se denotará por  $\vdash^{s}_{\alpha,\gamma}$, se define de la siguiente manera:
\[
\begin{array}{cccc}
  \vdash_{\alpha,\gamma}: & L(\mathcal{E})\times L(\mathcal{E}) &\longrightarrow& \bigcup^{j=d}_{j=1}( A_{0}^{s}\times\cdots\times A_{p}^{s})^{j} \cup \{ \emptyset \} \\
   & (\psi_{x},\phi) & \mapsto & \langle \overline{a}_{1},...,\overline{a}\rangle_{d'}
\end{array}
\]
donde $d'\leq d$ y $\overline{a}_{i}\in A_{0}^{s}\times\cdots\times A_{p}^{s}$
\end{definicion}

\begin{definicion}[ Arquitectura funcional sensible IIB]

La arquitectura funcional sensible IIB se define como:

\[
f_{s}^{S}= (\pi_{c}(g^{d}_{s}()))\times h_{s}()
\]

donde

\[
g^{d}_{s} =\vdash_{\alpha,\gamma}
\]

\[
\begin{array}{cccc}
  h_{s}: & L(\mathcal{E})\times L(\mathcal{E}) &\longrightarrow &L(\mathcal{E}) \\
   & (\psi_{x},\phi) & \mapsto & \phi'
\end{array}
\]
\end{definicion}

Sustituyéndose $f^{S}_{s}$ en la ecuación general del exocomportamiento sensible por la AFS-IIA resulta la ecuación   \ref{ecua:AFS-IIB}.

\begin{equation}
\funi_{s}(\mathbf{t})=\mathbf{r_{t}}(\pi^{2}_{1}(\vdash_{\alpha,\gamma}(\psi,\phi)))
\label{ecua:AFS-IIB}
\end{equation}

\subsubsection{ Arquitecturas Funcional Sensible IIIA}

La arquitectura funcional que se presenta ahora, denominada arquitectura funcional sensible IIIA, es interesante porque permite  ver como dentro de las ecuaciones de la TCCV, caben más conceptos que el de predicción. En este caso, se trata de una arquitectura que representa la capacidad de aprender. Dentro de la arquitectura hay una función, denominada función aprendizaje,  que en función de la información sobre el pasado decide asignar una nueva función de predicción para generar la secuencia de actos que realizará  el sistema.

\begin{definicion}[ Función Aprendizaje de Predicción ]

La función aprendizaje de predicción, que se denotará por $ \circledast$, se define de la siguiente manera:

\[
\begin{array}{llll}
 \circledast :&\bigcup_{j=0}^{\infty}(L(\mathcal{E})\times\mathbb{P}_{d}^{\alpha,\beta} )^{j} & \longrightarrow &\mathbb{P}_{d}^{\alpha,\beta}\\
              &   \overline{m}&\mapsto&  \vdash_{s}^{i}
 \end{array}
\]
donde $\mathbb{P}^{\alpha,\beta}_{d}= \{ \vdash_{s}^{1},..., \vdash_{s}^{\omega} \}
 \qquad
\overline{m} \in \bigcup_{j=0}^{\infty}(L(\mathcal{E})\times\mathbb{P}_{d}^{\alpha,\beta} )^{j}$
\end{definicion}

\begin{definicion}[Arquitectura funcional sensible IIIA]
La arquitectura funcional sensible IIIA se corresponde con las expresión de funciones siguiente:
\[
f_{s}^{S}= (\pi_{1}(g^{d}_{s}()))\times h_{s}()
\]

donde
\[
\begin{array}{llll}
g^{d}_{s}  :&L(\mathcal{E})\times L(\mathcal{E})\times \bigcup_{j=0}^{\infty}(L(\mathcal{E})\times\mathbb{P}_{d}^{\alpha,\beta} )^{j}  & \longrightarrow & \bigcup^{j=d}_{j=1}( A_{0}^{s}\times\cdots\times A_{p}^{s})^{j} \cup \{ \emptyset \}  \\
\\
&  (\quad \phi,\;\varphi_{s},\;\overline{m} \quad)&\mapsto & (\circledast(\overline{m}))(\phi,\;\varphi_{s})
 \end{array}
\]

\[
\begin{array}{llll}
h_{s}  :&L(\mathcal{E})\times L(\mathcal{E})\times \bigcup_{j=0}^{\infty}( L(\mathcal{E})\times\mathbb{P}_{d}^{\alpha,\beta} )^{j}  & \longrightarrow &\bigcup_{j=0}^{\infty}(L(\mathcal{E})\times\mathbb{P}_{d}^{\alpha,\beta} )^{j} \\
              &  (\quad \phi,\;\varphi_{s},\;\overline{m} \quad)&\mapsto&  \overline{m}'
 \end{array}
\]
\end{definicion}

 La ecuación que resulta al sustituir la función $f_{s}^{S}$ por la AFS-IIIA en la ecuación general de los exocomportamientos sensibles y definiendo que todos los exocomportamientos sean sensibles ordenados es la ecuación \ref{ecua:AFS-IIIA}.

\begin{equation}
\funi_{s}(\mathbf{t})=\mathbf{r_{t}}(\pi_{1}(
\pi_{1}(g^{d}_{s}(\phi,\;\varphi_{s},\;\overline{m})))\times h(\phi,\;\varphi_{s},\;\overline{m}))
\label{ecua:AFS-IIIA}
\end{equation}

Como ya se ha mencionado estas ecuaciones no son ecuaciones numéricas, sino ecuaciones funcionales. La incógnita de las ecuaciones en el caso de los tipos I y II-A es la función predicción que la resuelve. Dado un exocomportamiento sensible la solución de la ecuación del comportamiento sensible puede no ser única. Evidentemente una de las cuestiones más interesante es ver algunas soluciones de estas ecuaciones; pero por ser este un artículo de introducción a la TES no se puede profundizar en esta cuestión\cite{Miguel2006}. Sin embargo hay cosas que deben de notarse sobre las ecuaciones. Una de esas cuestiones es que no se conoce un método mecánico para hallar las funciones que son solución de las ecuaciones. Otra cuestión que debe de notarse es que  las ecuaciones cubren tanto los sistemas con complejidad biológica como los de complejidad no biológica, ya que en ellas no hay nada que diferencie entre unos y otros tipos de sistemas.

\subsection{Parámetros de la Fasa Sensible}

Como fue mencionado en la sección 2 existen ciertos parámetros que permiten conocer si el valor de la fasa sensible de un sistema cumple las restricciones de existencia del universo del sistema. Esos parámetros son la estabilidad básica, la inestabilidad y la estabilidad total. En este apartado se estudiara y explicará la definición de estos conceptos para la arquitectura II-A, por la tanto el estudio se centrará en las funciones predicción de tipo-$\alpha,\beta$ orientadas.

\subsubsection{Estabilidad}

La estabilidad básica es una medida que cuantifica cuantos estados que cumplen las restricciones de existencia y cuantos  estados que no cumplen las restricciones de existencia conoce un sistema en base a su función $f_{s}^{S}$ . La estabilidad básica de un sistema es un número real entre 0 y 1. En esta apartado se definirá la estabilidad de una función predicción de tipo-$\alpha,\beta$. Pero antes de definir formalmente como se calcula la estabilidad básica  es necesario realizar otras definiciones.

\begin{definicion}[Conjunto Estados]
   El conjunto estados se trata de un conjunto  cuyos elementos son los estados del universo. Se representa por $E$, donde $\mathbf{E} = \{ \mathbf{e}_{1} ,...\}$
\end{definicion}

\begin{definicion}[Conjunto Estados Positivos]
El conjunto de estados positivos es el conjunto cuyos elementos son los estados del universo en los que aumenta la capacidad del sistema para persistir en estados donde el sistema no cumple las restricciones de existencia del. Se representa por $\mathbf{E}^{+} = \{ \mathbf{e}_{y_{1}} ,\mathbf{e}_{y_{2}},...\}$, donde $\mathbf{E}^{+}$ es un subconjunto de $\mathbf{E}$.
\end{definicion}

\begin{definicion}[Conjunto Estados Neutros]
El conjunto de estados  neutros es el conjunto cuyos elementos son los estados del universo que son neutros para el sistema que se estudia. Se representa por $\mathbf{E}^{\simeq} = \{ \mathbf{e}_{u_{1}} ,\mathbf{e}_{u_{2}},...\}$, donde $\mathbf{E}^{\simeq}$ es un subconjunto de $\mathbf{E}$.
\end{definicion}

\begin{definicion}[Conjunto Estados Negativos]
El conjunto de estados  negativos es el conjunto cuyos elementos son estados del universo donde el sistema que se estudia no cumple las restricciones de existencia del universo. Se representa por $\mathbf{E}^{-} = \{ \mathbf{e}_{z_{1}} ,\mathbf{e}_{z_{2}},...\}$, donde $\mathbf{E}^{-}$ es un subconjunto de $\mathbf{E}$.
\end{definicion}

\begin{definicion}[Conjunto Representaciones de Estados]El conjunto representaciones de estados es el conjunto cuyos elementos son representaciones de estados del universo del sistema que se estudia. Se representa por  $R$, donde $R = {  \psi_{1} ,...}$ y $ R \subset L(\mathcal{E})$.
\end{definicion}

\begin{definicion}[]
La función representación es una función que asigna a cada estado una fórmula de $R$. Se denota por $r$ y se define
como:

\[
\begin{array}{cccc}
  r & \mathbf{E}   & \rightarrow  & $R$ \\
   & \mathbf{e}_{x}& \mapsto  &  \psi_{x}
\end{array}
\]

No se debe confundir el estado realidad, nótese que la función representación no tiene el subíndice $t$.
\end{definicion}

\begin{definicion}[Conjunto Objetivos]
El conjunto objetivos es el conjunto cuyos elementos son representaciones de estados del universo  a los cuales conducen las secuencias de actos del sistema que se estudia. Se representa por $O$, donde $O = \{\psi_{y} ,...\}$ y  es un subconjunto de $R$.
\end{definicion}

\begin{definicion}[Conjunto Objetivos Positivos]
El conjunto objetivos positivos es el conjunto cuyos elementos son representaciones de estados del universo que pertenecen al conjunto de estados positivos. Se representa por $O^{+}$, donde
\[
O^{+} = \{  y: r^{-1}( \psi_{y}) \in  E^{+}  \}
\]
y  es un subconjunto de $R$.
\end{definicion}

\begin{definicion}[Conjunto Estados de Partida hacia el estado-$i$]
El conjunto estados de partida hacia el estado-$i$ es el conjunto cuyos elementos son estados del universo desde los cuales el sistema es capaz de llegar al estado-$i$. Su definición matemática es la siguiente:
\[
P_{i}= \{ r^{-1}(\psi_{x}): \vdash_{\alpha, \beta}(\psi_{x},\varphi_{i})  \}
\]

\end{definicion}

\begin{definicion}[Conjunto Estados Negativos con Alejamiento hacia el estado-$i$]
El conjunto estados negativos con alejamiento hacia el estado-$i$ es el conjunto de estados que pertenecen al conjunto de estados negativos para el sistema desde los cuales el sistema lleva a cabo una secuencia de acciones para alejarse de él. Su definición matemática es la siguiente:
\[
A_{i}^{-}= \{ e_{x}: e_{x}\in \mathbf{E^{-}} \wedge e_{x} = r^{-1}(\psi_{x})\wedge\vdash_{\alpha,\beta}(\psi_{x},\varphi_{i})\neq\emptyset
\}
\]
donde $\psi_{x} \in O^{+}\qquad \varphi \in O^{\simeq} $
\end{definicion}

Una vez realizadas las anteriores definiciones, ya se puede definir formalmente  estabilidad básica que posee la función predicción de tipo $\alpha,\beta$  de un sistema.

\begin{definicion}[Estabilidad Básica]
Se define estabilidad básica de una función predicción, y se denota por $\textrm{Estabilidad}_{b}$, a

\[
\textrm{Estabilidad}_{b}(\vdash_{\alpha,\beta})=\frac{\sum_{i=1}^{|O^{+}|} \frac{|P_{i}|}{|E|}
}{|O^{+}|} + \frac{\sum^{|E^{\simeq}|}_{j=1} A^{-}_{j} }{|E|}
\]
\end{definicion}
\begin{ejemplo}
Dado un universo con $5$ estados distintos $\mathbf{E}=\{ \mathbf{e}_{1}, \mathbf{e}_{2}, \mathbf{e}_{3}, \mathbf{e}_{4}, \mathbf{e}_{5} \}$ se quiere estudiar la estabilidad básica de un sistema de ese universo que posee representación para los cinco estados $R =\{  \psi_{1},  \psi_{2},  \psi_{3}, \psi_{ 4},  \psi_{5}\}$. De los cinco estados, uno es positivo para el sistema $\mathbf{E}^{+}=\{ \mathbf{e}_{1} \}$, dos son neutros $\mathbf{E}^{\simeq} =\{ \mathbf{e}_{2} , \mathbf{e}_{3}\}$ y los otros dos son negativos para el sistema $\mathbf{E}^{-}=\{ e_{4}, e_{5} \}$. La descripción del estado positivo es la única descripción que se encuentra en el conjunto de objetivos del sistema $O =\{ \psi_{1}\}$. El sistema tiene una función predicción definida de manera que devuelve representaciones de secuencias de acciones desde las representaciones de los dos estados neutros a la representación del objetivo. Es decir, la secuencia que devuelve es diferente de la secuencia vacía

\[
\vdash (\psi_{2}, \psi_{1} )\neq \emptyset
\]
\[
\vdash (\psi_{3}, \psi_{1} )\neq \emptyset
\]

También devuelve representación de secuencia de acciones desde la representación del estado positivo a la representación del estado objetivo (que en este caso es la misma representación)
\[
\vdash (\psi_{1}, \psi_{1} )\neq \emptyset
\]
 y por último la función devuelve una representación de secuencia de acciones desde la representación de uno de los estados negativos a una de las representaciones del estado neutro como objetivo
\[
\vdash (\psi_{4}, \psi_{2} )\neq \emptyset
\]
Así, para un estado negativo la función predicción no puede devolver una representación de una secuencia de acciones que lo salve.
\[
\vdash (\psi_{5}, \psi_{1} )= \emptyset
\]
\[
\vdash (\psi_{5}, \psi_{2} )= \emptyset
\]
\[
\vdash (\psi_{5}, \psi_{3} )= \emptyset
\]
\[
\vdash (\psi_{5}, \psi_{4} )= \emptyset
\]

Por lo tanto el resultado del sumatorio es

\[
\sum_{i=1}^{|O|} \frac{|P_{i}|}{|E|}= \frac{3}{5}
\]

Puesto que el conjunto de objetivos positivos sólo tiene un elemento, el resultado  de la otra parte de la fórmula queda
\[
 \frac{\sum^{|E^{\simeq}|}_{j=1} A^{-}_{j} }{|E|} = \frac{1}{5}
\]

\[
\textrm{Estabilidad básica}(\vdash_{\alpha,\beta})=\frac{4}{5}
\]

\end{ejemplo}

\subsubsection{Inestabilidad}
El siguiente parámetro que se va a explicar es la inestabilidad. El concepto de inestabilidad hace referencia a cuánto está de confundido el sistema sobre el las restricciones de existencia del universo. Es decir, imagínese que un sistema tiene entre sus estados objetivos representaciones de estados negativos, o estados positivos de los que la función predicción genera secuencias de actos que hace que se pierdan esos estados. Antes de ver la fórmula para calcular la inestabilidad de una función predicción se definirán algunos conceptos que serán necesarios calcular para obtener la inestabilidad de una función predicción.

\begin{definicion}[Conjunto Objetivos Negativos]
El conjunto objetivos negativos es el conjuntos de elementos que son representaciones de estados del universo del sistema que se estudia. Se representa por $O^{-}$ donde
\[
O^{-} = \{  \psi_{y}: r^{-1}( \psi_{y}) \}
\]
$O^{-} $  es un subconjunto de $O$.
\end{definicion}

\begin{definicion}[Conjunto  Estados Positivos con Alejamiento  hacia el estado-$i$  ]
El conjunto de estados positivos con alejamiento  hacia el estado-$i$  cuyos elementos son estados del ambiente positivos para el sistema desde los cuales el sistema es capaz de predecir una secuencia de acciones para alejarse de él. Su definición es la que sigue
\[
A_{i}^{+}= \{ e_{x}: e_{x}\in \mathbf{E^{+}} \wedge e_{x} = r^{-1}(\psi_{x})\wedge\vdash_{\alpha,\beta}(\psi_{x},\varphi_{i})\neq\emptyset
\}
\]
\end{definicion}

En el concepto de inestabilidad básica se compone de dos conceptos distintos de inestabilidad de una función predicción. El primer concepto, que se corresponderá con el primer término de la fórmula que después se verá, habla de la inestabilidad de objetivos. Así, se hallará en el cálculo los objetivos que son negativos para el sistema y desde cuantos estados del universo la función predicción quiere llegar a ellos.

El segundo concepto, que vendrá reflejado en el segundo término, dará un índice que hace referencia al número de estados neutros que tiene como objetivo la función predicción y para los que la función predicción es capaz de calcular secuencias de acciones partiendo de estados positivos.

La razón de definir la inestabilidad básica como la suma de los dos anteriores conceptos es porque es lógico pensar que una función predicción no debe nunca de provocar comportamientos que lleven a estados negativos o que saque de estados positivos para ir a estados neutros.

\begin{definicion}[Inestabilidad]
Se define inestabilidad de una función predicción de tipo $\alpha,\beta$ como
\[
\textrm{Inestabilidad}_{b}(\vdash_{\alpha,\beta})=\frac{\sum_{i=1}^{|O^{-}|} \frac{|P_{i}|}{|E|}
}{|O^{-}|} + \frac{\sum^{|E^{\simeq}|}_{j=1} A^{+}_{j} }{|E|}
\]

\end{definicion}

\subsubsection{ Estabilidad Total}

El último parámetro que se va a mostrar es el de estabilidad total. La función de la estabilidad total es dar una medida global de lo adecuado que es el valor de la fasa  de un sistema.  Esta medida se define como la diferencia entre la estabilidad básica y la inestabilidad.

\begin{definicion}[Estabilidad Total]
Se define estabilidad total, y se denota por $\textrm{Estabilidad}_{t}$, de una función predicción de tipo $\alpha,\beta$

\[
\textrm{Estabilidad Total} (\vdash_{\alpha,\beta} )=  \textrm{Estabilidadb}
\]
\end{definicion}

Los parámetros que han sido definidos aquí se sirven para medir la función predicción de un sistema. Pero son la base también para medir cómo de buena es una función de aprendizaje. Esto se debe a que si los parámetros definidos sirven para comparar  funciones predicción entre diferentes sistemas; también  para el mismo sistema. Por lo tanto, se puede medir funciones de aprendizaje viendo la diferencia de los parámetros de las funciones predicción que tenga el sistema inicialmente y funciones predicción que genere la función aprendizaje.
Por último hay que notar que los parámetros definidos aquí surgen en función de las restricciones de exoactividad del universo sobre el sistema, pero en una perspectiva evolutiva donde se considere un grupo de sistemas con la misma función predicción cierta inestabilidad pare el sistema podría repercutir con un mayor tiempo en un estado de  exoactividad para el resto de sistemas del grupo.

\section{Discusión}
La TGE acaba de comenzar su camino; pero ya muestra una imagen completamente diferente de la que popularmente se tiene sobre los comportamientos cognitivos. Las ciencias cognitivas  usan el concepto de inteligencia para explicar muchos comportamientos cognitivos. En la TGE se ha decidido eliminar la etiqueta ``inteligencia'' para cualquier concepto. Esa decisión se ha tomado porque el término ``inteligencia'' se usa con un significado distinto en cada ciencia cognitiva, e incluso posiblemente cada uno de los miembros de la comunidad científica de las ciencias cognitivas tenga su propia definición. Así, para evitar polémicas innecesarias y prejuicios sobre los conceptos que contiene la TGE se ha preferido no usar el término  ``inteligencia'' para designar conceptos de la TGE.  Sin duda, la nueva explicación de los comportamientos cognitivos de la TGE basada en una propiedad puede parecer en un primer momento algo difícil de aceptar. Pero que los conceptos de la naturaleza tengan que ser estructurados de una manera diferente a la experiencia sensible del ser humano no es un hecho que sea extraño para el método científico. Como ya se mencionó, la definición de calor usada en física difiere profundamente de la definición popular de calor; por ejemplo, se hace imposible pensar desde la experiencia sensible que un cubito de hielo pueda generar calor porque cuando un ser humano lo toca percibe frío.  En cambio, la física al definir calor de manera absoluta, y relativa a la percepción,  puede  explicar qué ocurre si se pone en contacto un cubito de hielo con nitrógeno en estado líquido. La razón de ese problema reside en que la experiencia sensible de los seres humanos sobre la naturaleza está ligada al rango de percepciones que posee este. Por lo tanto, para llegar a estructurar conocimiento sobre la naturaleza es necesario formular el conocimiento en términos absolutos y no en términos relativos. Así, la formulación de definiciones que hablen en términos absolutos de la naturaleza, y no relativos a la percepción humana, permiten que la ciencia avance, ya que estructuran el conocimiento sobre la naturaleza de manera consistente. Una definición que se realiza en términos relativos a la percepción humana conducirá a la ambigüedad y hará caer en la confusión a la ciencia; de modo que esa definición se deberá de revisar y poner en términos absolutos.

\subsection{Diferencias entre la TGE y Otras Propuestas }
Una pregunta legítima es, ¿qué diferencia la teoría que se está desarrollando de las propuestas para que las leyes de la física incluyan la cognición e inteligencia humana que se mencionaron en la primera sección? La diferencia radica en que la TGE afirma que la noción de computación es una noción fundamental del universo que aparece cuando en la naturaleza surgen sistemas físicos con energía interna y que es la responsable de los exocomportamientos. En cambio, en las propuestas de Penrose y Doyle los exocomportamientos de los seres humanos son una consecuencia de las leyes de la física ya establecidas, y no la manifestación de un aspecto fundamental de la naturaleza.

Una propuesta para dar un trato general a los exocomportamientos es la de Hutter\cite{Hutter2006}, denominada Universal AI$\xi$ Model. Hutter propone un aparato matemático dentro de la teoría de la probabilidad para describir de manera general todos los exocomportamientos. La principal diferencia de la TGE con la de Hutter, es que la propuesta de la Hutter no  es una teoría científica, sino simplemente un aparato matemático para describir exocomportamientos, ya que no plantea ningún hecho sobre la naturaleza que pueda ser falseado mediante experimentos. Por otro lado, no parece el aparato matemático adecuado para describir todos los exocomportamientos. Por ejemplo, para el exocomportamiento sensible que desarrollaría un robot con el programa propuesto para probar el teorema fundamental de los exocomportamientos. No parece razonable describir el exocomportamiento sensible del robot mediante distribuciones de probabilidad porque se sabe que el comportamiento no es aleatorio en ningún caso.

\subsection{ Poniendo a prueba la TGE }
La solidez de una explicación es una característica indispensable para que esta sea aceptada. El método científico reconoce la solidez de un hecho en base a dos criterios fundamentales. La falsabilidad y la reproducibilidad. La falsabilidad quiere decir que existen experimentos para comprobar el hecho. La reproducibilidad es la capacidad de repetir un determinado experimento en cualquier lugar y por cualquier persona sin que sus resultados contradigan el hecho que se pone a prueba en el experimento. La razón de estos dos criterios es que en ciencia es imposible probar que un hecho se va a cumplir siempre. Pero si se realizan suficientes experimentos sobre el hecho y ninguna lo consigue falsear, usando inducción se considera que siempre se cumplirá. En una teoría científica los hechos que deben cumplir los anteriores criterios son los postulados. Por lo tanto, no se puede obviar estas cuestiones para que la TGE tenga una consideración de teoría científica. Así, la primera cuestión que se debe de estudiar es si es posible realizar experimentos cuyos resultados validen los postulados, ya que si se realiza un número importante de experimentos y ninguno sale negativo entonces podrá confiarse en las explicaciones que aporta la TGE sobre la naturaleza.

\subsection{Falseabilidad del Postulado Primero}

El postulado primero dicta que sólo existen tres tipos de carácter elemental para la fasa de un sistema. La manera de conocer si este postulado es falso sería estudiar sistemas biológicos y no biológicos de la naturaleza y encontrar uno que genere un comportamiento donde el carácter de su fasa no sea ni un carácter elemental, ni un híbrido de ellos. Por lo tanto, el primer postulado cumple el criterio de falseabilidad.

\subsection{Falseabilidad del Postulado Segundo}

A continuación, se va a estudiar la falseabilidad del postulado segundo mediante su forma lógica. Este análisis se hace con una doble intención. Primero para buscar el tipo de experimentos que ponen a prueba  el segundo postulado. La segunda intención es debido a que normalmente cuando el lector se enfrenta a aceptar el postulado segundo le surgen cuestiones relacionadas con los posibles casos de entornos de la naturaleza donde se aplicaría el postulado. Existen ocho situaciones que surgen al confrontar tipos de sistemas con los tipos de universos sobre las que el postulado se puede aplicar:
\begin{enumerate}
  \item	Sistemas que no tienen capacidad de moverse

  \begin{description}
  \item[a)]	En un universo con sensibilidad al acto
  \item[b)]	En un universo sin sensibilidad al acto

\end{description}
  \item	Sistemas con capacidad de moverse en universos sin sensibilidad al acto

  \begin{description}
          \item[a)]	Sólo hay sistemas sin representación
          \item[b)]	Sólo hay sistemas con representación
          \item[c)]	Sistemas con y sin representación
        \end{description}
  \item	Organismos con capacidad de moverse en universos sin sensibilidad al acto
    \begin{description}
          \item[a)]	Sólo hay sistemas sin representación
          \item[b)]	Sólo hay sistemas con representación
          \item[c)]	Sistemas con y sin representación
        \end{description}
\end{enumerate}
El análisis consistirá en evaluar la forma lógica del postulado mediante su tabla de verdad en cada una de las ocho situaciones. Para ellos se empezará expresando la forma lógica del postulado. El postulado se corresponde con la siguiente fórmula lógica:
\[
(s \wedge  r \rightarrow r')  \wedge  (s \wedge  n \rightarrow  n')
\]
\begin{itemize}
  \item $s$ = `` universo con sensibilidad al acto''
  \item $r$ = ``hay sistemas que se mueven en función de una  representación''
  \item $n'$ = ``hay sistemas que no se mueven en función de una  representación''
  \item $r$ = ``persisten sistemas que se mueven en función de una  representación''
  \item $n'$ = ``persisten sistemas que no se mueven en función de una  representación''
  \item $\wedge$ es el operador lógico ``y''
  \item $\rightarrow$ es el operador lógico implicación
  \item V es el valor de VERDAD
  \item F el de FALSO
\end{itemize}

 A continuación, se va  a estudiar la fórmula lógica del postulado segundo en las distintas situaciones. Se comenzará el estudio de un conjunto de situaciones exponiéndose un caso de la naturaleza que está ligado a ellas. Seguidamente se llevará a cabo un análisis en las tablas del valor de verdad del postulado en las situaciones en la que está incluido el caso.

\begin{description}
  \item[I] SISTEMAS SIN CAPACIDAD DE MOVERSE
  \newline
  \newline
  Ejemplo:
\textit{Hay determinadas plantas subacuáticas que poseen una especie de flotadores en sus hojas lo que posibilita que la planta se sitúe y permanezca en una posición vertical y no sobre el fondo marino. De esa manera, las hojas de esas plantas acuáticas están más cerca de la superficie, lo que permite que reciban más energía solar, ya que la diferencia de energía recibida en las hojas varía dependiendo de la profundidad a la que se encuentren situadas.
}

 Las plantas acuáticas del ejemplo no usan una representación del entorno para desarrollar la citada estrategia evolutiva, la cual es beneficiosa para su existencia ¿son entonces un contraejemplo del postulado segundo?

El caso anterior está incluido en las situaciones en las que el sistema no puede moverse. La tabla de verdad de  estas situaciones es la siguiente:

\begin{tabular}{| c| c | c| c |c | c| c |}
\hline
 CASO & $s$ & $r$ & $n$ & $s \wedge r$ & $s\wedge n$ &  $(s \wedge  r \rightarrow r')  \wedge  (s \wedge  n \rightarrow  n')$  \\
\hline
a)& V &	F &	F &	F &	F &	V \\
\hline
b)& F &	F &	F & F &	F &	V
\\
\hline
\end{tabular}

En la tabla de verdad se observa que tanto en el caso a) como en el b) la fórmula es verdadera. Esto es debido a que si los sistemas que se estudian no tienen capacidad de moverse entonces los antecedentes de los condicionales son falsos, y en un condicional cuando su antecedente es falso independientemente del valor del consecuente, el condicional es verdadero. Por lo tanto en este caso el postulado es verdadero de manera trivial, ya que no afecta de ninguna manera a los sistemas del universo. Por lo tanto en estas situaciones pueden seleccionarse sistemas que no se muevan, y la selección se hace en función de la estrategia evolutiva  que presente el sistema.
Así, la respuesta a si el ejemplo presentado es un contraejemplo al postulado la respuesta es no. Esas plantas no tienen capacidad de moverse pero su estrategia evolutiva es beneficiosa al permitir que consiga más energía solar, por lo que son seleccionadas por la naturaleza sin necesidad de tener una representación del entorno.

  \item[II] SISTEMAS CON CAPACIDAD DE MOVERSE EN UNIVERSOS CON SENSIBILIDAD AL ACTO
    \newline
    \newline
      Ejemplo:
\textit{Un ejemplo de sistemas que se mueven  y pueden sobrevivir sin una representación en universos son los elasmobranquios (tiburones) cuando se encuentran en su fase de embrión en el huevo. Para que se distribuya el oxígeno a todos los tejidos del cuerpo, los fluidos del huevo (ooplasma) deben moverse continuamente. Así, los elasmobranquios  necesitan moverse para obtener oxígeno, de lo contrario mueren. Cuando están en los huevos en su fase embrión desarrollan este movimiento sin un sistema nervioso (y por ende sin una representación del entorno) simplemente tienen un patrón  de contracción  muscular que los hace moverse.}

Los seres vivos del ejemplo no tienen una representación del entorno pero se mueven y no son pasan a un estado de exoinactivos  ¿son entonces un contraejemplo del postulado segundo?

El ejemplo anterior está incluido en el conjunto de situaciones en las que el sistema puede moverse y el universo no es sensible a las condiciones iniciales. Esto es debido a que el embrión no puede salir del huevo y que no  importa la posición o la dirección en la que se mueva el elasmobranquio, el sólo hecho de que él se mueva le permite obtener el oxígeno. Así, en la práctica, el huevo es un universo sin sensibilidad al acto para el elasmobranquio. La tabla de verdad de  estas situaciones es la siguiente:

\begin{tabular}{| c| c | c| c |c | c| c |}
\hline
 CASO & $s$ & $r$ & $n$ & $s \wedge r$ & $s\wedge n$ &  $(s \wedge  r \rightarrow r')  \wedge  (s \wedge  n \rightarrow  n')$  \\
\hline
a)& F &	F &	V &	F &	F &	V \\
\hline
b)& F &	V &	F &	F &	F &	V \\
\hline
c)& F &	V &	V &	F &	F &	V \\
\hline
\end{tabular}

Como se ve en la tabla de verdad de este conjunto de situaciones, en ninguna de ellas pueden volver falsa la fórmula lógica que contiene el postulado segundo, ya que al ser los antecedentes falsos de nuevo por las características del condicional este es verdadero. Por lo que  la respuesta para la pregunta, sobre si el ejemplo presentado es un contraejemplo que se planteo, vuelve a ser que no.  En entornos sin sensibilidad al acto la evolución puede seleccionar sistemas que se muevan y no usen una representación del entorno para realizar su comportamiento y la fórmula lógica que contiene el segundo postulado sigue siendo verdad.

\item[III] SISTEMAS CON CAPACIDAD DE MOVERSE EN UNIVERSOS CON SENSIBILIDAD AL ACTO
    \newline
    \newline
\begin{tabular}{| c| c | c| c |c | c| c |}
\hline
 CASO & $s$ & $r$ & $n$ & $s \wedge r$ & $s\wedge n$ &  $(s \wedge  r \rightarrow r')  \wedge  (s \wedge  n \rightarrow  n')$  \\
\hline
a)& V &	F &	V &	F &	V &	?\\
\hline
b)& V &	V &	F &	V &	F &	?\\
\hline
c)& V &	V &	V &	V &	V &	?\\
\hline
\end{tabular}

 Como muestra la tabla de verdad, en estos tres casos el valor de verdad del condicional no se puede saber por el antecedente, es necesario conocer el valor de verdad del consecuente para saber qué valor contiene la fórmula lógica del postulado. Por lo tanto, para que los experimentos pongan a prueba el postulado segundo deben coincidir con los valores de los antecedentes de alguno de los casos de esta tabla de verdad. Si el postulado se cumple en la naturaleza el consecuente del condicional debe ser verdad en cada experimento. Una cuestión que debe ser tenida en cuenta al planificar los experimentos es que la estabilidad  básica debe ser alta y la inestabilidad baja, ya que en el análisis lógico no se ha tenido en cuenta explícitamente lo correcta que es la representación del entorno, sino que implícitamente se ha supuesto que las representaciones lo eran.
\end{description}

Con cada experimento que se lleve a cabo donde el consecuente resulte verdadero el postulado segundo cobrará solidez, y a su vez, la TGE recibirá legitimidad como teoría para explicar los comportamientos en general y los sensibles en particular.Los tunicados, de los que ya se ha hablado antes, son una prueba biológica de que la evolución selecciona sistemas con mecanismos de representación del ambiente cuando el sistema se mueve en un ambiente con sensibilidad al acto, ya que de esa manera se disminuyen drásticamente las posibilidades de perecer. Otra prueba del postulado son los elasmobranquios citados anteriormente, ya que la selección natural ha provocado que para cuando dejan el huevo y deben moverse en el mar posean un cerebro que les permita tener una representación. En este caso el mar es un ambiente con sensibilidad al acto y la selección natural ha provocado que tanto los tunicados como los elasmobranquios usen un cerebro para albergar una representación del ambiente y moverse. Pero no son suficientes dos pruebas para que sea sólido el postulado segundo, es necesario realizar experimentos y verificar que sus resultados son consistentes con los del postulado. La manera más factible de llevar a cabo experimentos que pongan a prueba el segundo postulado es mediante experimentos informáticos. Recuérdese  que la definición de sistema dice que un sistema es definido tanto por una relación física como lógica, por lo que los experimentos informáticos son tan importantes como los que se pudieran hacer con sistemas definidos por relaciones físicas.  Por otro lado, los experimentos informáticos permiten tener totalmente controlado un experimento y se pueden obtener resultados en una cantidad de tiempo asequible.

Un experimento que se está realizando es usar un mundo virtual con sensibilidad al acto y restricciones de existencia, en el que se recogen los tiempos de persistencia de los agentes que llevan a cabo un comportamiento sensible, aleatorio y posicional . Una vez recogidas todas las mediciones de tiempos se debe confrontar los tiempos de los agentes con comportamientos aleatorios y posicionales  con los de los comportamientos sensibles. Según el segundo postulado los resultados de los experimentos deben de mostrar una cantidad de tiempo de existencia mucho más alta en los comportamientos sensibles que en los otros dos tipos de comportamientos elementales. Los primeros resultados parecen confirmar el segundo postulado.

\subsection{TES y Percepción}

Otra cuestión que debe de comentarse es la relación de  la TCCV con la percepción. La TCCV no habla de una manera explícita de la percepción, porque al igual que la TCCV no especifica que sea necesario un concreto mecanismo de computación tampoco dice que sea necesario un mecanismo concreto de percepción. Pero sin duda, existe una cuestión que hace completamente necesaria la existencia de mecanismos de percepción. Un sistema no puede tener una representación del entorno si carece de mecanismos para percibir el entorno, ya que es necesario que se pueda establecer una función entre los estados del universo y los estados del subsistema que usa el sistema para crear la representación. Así, aunque el nivel de abstracción del marco matemático convierte la cuestión de la percepción en implícita, las relaciones que existen entre  percepción y  la TES pueden ser muy diferentes de algo trivial. El número de fórmulas que tenga el lenguaje que usa el sistema para describir los estados del universo y el poder expresivo de este lenguaje deben de estar en consonancia con los mecanismos de percepción que use el sistema. De modo que los mecanismos de percepción permitan usar todo el poder expresivo del lenguaje para representar el ambiente.

\subsection{TGE y la Teoría de la Evolución}
Una cuestión importante que sobre la TGE es si es compatible con la teoría de la evolución. La respuesta es afirmativa. Para ver la relación entre la teoría de la evolución y la TGE, primero se debe mencionar que se puede definir una relación de orden parcial sobre las distintas arquitecturas funcionales no redundantes usando el número de unidades funcionales que poseen las arquitecturas funcionales. Esa relación tiene un mínimo que es la arquitectura de tipo I. Si se dibuja la relación que existe usando la complejidad de los tipos de ecuaciones aparece la figura \ref{fig:evo} La idea que se propone en la TGE es que por el postulado segundo esta estructura es un patrón que sigue la naturaleza, ya que la selección natural obliga a que se desarrollen sistemas que realicen comportamientos sensibles que permitan un mejor cumplimiento de las restricciones de existencia de un universo. Es decir, respecto a los sistemas que desarrollan comportamientos sensibles la TGE propone que en la naturaleza primero aparecieron sistemas de tipo I y a partir de este surgieron el resto.

Pero subir de nivel de complejidad no es la única manera en la que la evolución se relaciona con la TGE. Por el postulado segundo la selección natural también puede verse como un procedimiento que realiza una exploración de las soluciones de la ecuación, de manera que persisten las soluciones con mayor índice de estabilidad básica y menor índice de inestabilidad de a la ecuación que van surgiendo.

\begin{center}
\begin{figure}
  \centering
    \includegraphics{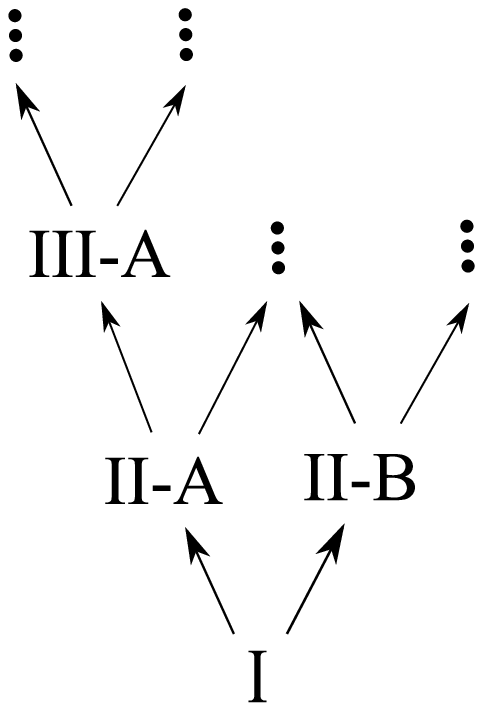}
  \caption{Relación de orden definida por el aumento de unidades funcionales en las las diferentes arquitecturas. Como se puede ver en la relación existe un elemento mínimo, la arquitectura I.}
  \label{fig:evo}
\end{figure}
\end{center}

\subsection{La TGE y los Comportamientos Distribuidos}

Hasta ahora se ha hablado de exocomportamientos cuando estos se han realizado por un sistema, pero ¿qué hay sobre los comportamientos distribuidos, que son realizados por muchos sistemas? En realidad, no cambia apenas nada. Todos los sistemas que realizan un comportamiento distribuido son, a su vez, un sistema. Todas las hormigas que realizan un comportamiento distribuido forman un hormiguero, todas las abejas que realizan un comportamiento distribuido forman una colmena,... . Es decir, a su vez todos los sistemas que realizan un comportamiento distribuido pueden verse como subsistemas de un sistema y el comportamiento distribuido sigue siendo un exocomportamiento, la diferencia está en que la relación que hay entre ellos no es física sino lógica. Así, los exocomportamientos distribuidos también serían explicados por la TGE  aunque  evidentemente son necesarias ciertas modificaciones en el aparato matemático para describirlos con ecuaciones en el marco matemático que usa la TGE.

\subsection{Presente y Futuro de la TGE}

Las leyes de la mecánica que rigen las partículas elementales de la naturaleza dirigen la evolución del estado de la naturaleza; pero intentar entender o predecir el estado futuro de la naturaleza a nivel macroscópico es un problema inabordable mediante las leyes del mundo microscópico debido al orden de magnitud del número de partículas de los sistemas macroscópicos ($10^{25}$ partículas). Por lo tanto, el usar teorías que explican el nivel microscópico para explicar los comportamientos a nivel macroscópico es un enfoque impracticable, ya que requeriría la resolución de un número increíblemente grande de ecuaciones diferenciales; pero no sólo eso, sino que introducir las condiciones iniciales de tal sistema sería imposible. La ciencia dispone en determinados aspectos de la naturaleza de una teoría macroscópica que permite explicar la naturaleza a nivel microscópico y donde además el nivel macroscópico puede ser explicado desde el nivel microscópico. Por ejemplo, la mecánica o la termodinámica. Pero en el caso de los comportamientos que son fenómenos microscópicos el lenguaje de las teorías de la psicología, la etología  o la I.A. no puede ser enlazado con el nivel microscópico.  El resultado obtenido durante la investigación que se ha presentado en este artículo es una teoría para explicar los comportamientos  que sirve de puente entre el nivel macroscópico y el nivel microscópico de la naturaleza. La TGE es una propuesta que permite enlazar los comportamientos de los sistemas a nivel macroscópico, descritos por ejemplo por la etología, la psicología o la I.A. con el nivel microscópico de las leyes que rigen las interacciones entre las partículas fundamentales.  Eso es posible gracias a que por un lado la teoría parte del concepto de sistema, cuya formación y existencia se pueden explicar perfectamente desde el nivel microscópico; y por otro lado la teoría explica las alteraciones que produce el sistema a nivel macroscópico a cargo de su energía interna.

Este documento establece los pilares y la estructura de la TGE, pero queda pendiente construir el resto del edificio. En la TGE actualmente existe una mezcla de preguntas en investigación básica sobre la propia TGE y otro conjunto de preguntas de investigación aplicada.  Preguntas en investigación básica como ¿Cuantas arquitecturas funcionales existen? ¿Qué soluciones podemos encontrar a las ecuaciones de las diferentes arquitecturas funcionales? ¿Hay más parámetros para medir las unidades funcionales a parte de los de estabilidad e inestabilidad? ¿Existen exocomportamientos en el conjunto de exocomportamientos sensibles no ordenados que pueden permitir que un sistema persista en un universo con sensibilidad al acto?

Y en investigación aplicada hay preguntas como: ¿Qué exocomportamientos sensibles de la naturaleza explican exactamente  cada una de las diferentes ecuaciones y cada una de las diferentes soluciones? ¿Y los parámetros de estabilidad e inestabilidad?  ¿Se puede aplicar esta teoría a la psicología? ¿La podemos aplicar a I.A. distribuida? ¿Podemos usarla para entender mejor las redes neuronales?

Como se ha mencionado al principio de esta sección, los conceptos que aporta la TGE para explicar los comportamientos puede que no sean fáciles de admitir. Pero si los experimentos respaldan la teoría, un interesante horizonte se abre ante nuestros ojos a través de la teoría general del exocomportamiento.

\section*{Agradecimientos}

Me gustaría agradecer a  todas las personas que me han ayudado en este largo pero emocionante viaje que ha sido el llegar a desarrollar la teoría ge\-neral de los exocomportamientos. Entre todos ellos quiero destacar a Rodolfo Llinás al que agradezco profundamente nuestras inolvidables conversaciones durante estos dos últimos años, porque han impulsando enormemente mi deseo de hallar respuestas sobre la naturaleza, y al que sólo puedo considerar un auténtico maestro para mí como persona y científico. A continuación, me gustaría dar las gracias a Enrique Alonso por alentarme a trabajar en los fundamentos y el armazón de la TGE haciéndome ver donde había saltos que debían ser completados. A María Teresa López Bonal y Arnau Ramisa porque con sus ojos han logrado que este documento tenga muchos menos errores de los que tenía. A Luis de la Ossa porque ha escuchado demasiadas veces mis meditaciones sobre la TGE. A las interesantes críticas de Enric Trillas, a los comentarios de José Mira Mira. Agradezco profundamente a mis padres su apoyo durante tantos años. A José María Cabañes quiero agradecer  su interés en que sus alumnos intenten pensar por ellos mismos y  por haber tenido aquella capacidad de ver que aquella primera idea, que le presenté, era una semilla de donde podía germinar un gran árbol. Quiero dar las gracias a Juan Ángel Aledo por enseñarme especialmente que las matemáticas eran un lenguaje para transmitir ideas. A María Gracia Manzano le agradezco que sea mi guía por ese complejo mundo de la lógica matemática  y por  compartir sus enormes conocimientos conmigo. A Miguel Ángel Graciani el que me haya escuchado tantas, y tantas, veces  y hacerme críticas constructivas sobre qué cuestiones debía afianzar en mi trabajo. Agradezco a todos los que me han prestado de su tiempo, hayan estado mucho,  o poco, de acuerdo conmigo; porque que alguien te regale parte de su tiempo es el regalo más precioso que se puede hacer.

\bibliographystyle{elsarticle-num}

\bibliography {Bibliografia}
\end{document}